\begin{document}

	\title{Minimum mean-squared error estimation with bandit feedback}

\author{
	Ayon Ghosh$^\dagger$,  Prashanth L.A.$^\dagger$, Dipayan Sen$^\dagger$ and Aditya Gopalan$^\ddag$  
	\thanks{ 
		\hspace{-1.5em}$\dagger$ Department of Computer Science and Engineering,
		Indian Institute of Technology Madras, Chennai; 
		E-mail: cs21b013@smail.iitm.ac.in; 
		\{prashla,cs18s012\}@cse.iitm.ac.in,	\\
		$\ddag$ Department of Electrical Communication Engineering,
		Indian Institute of Science, Bangalore,
		E-mail: aditya@iisc.ac.in,
	}
}
 
\maketitle
\begin{abstract}
	We consider the problem of sequentially learning to estimate, in the mean squared error (MSE) sense, a Gaussian $K$-vector of unknown covariance by observing only $m < K$ of its entries in each round.  We propose two MSE estimators, and analyze their concentration properties. The first estimator is non-adaptive, as it is tied to a predetermined $m$-subset and lacks the flexibility to transition to alternative subsets. The second estimator, which is derived using a regression framework, is adaptive and exhibits better concentration bounds in comparison to the first estimator. We frame the MSE estimation problem with bandit feedback, where the objective is to find the MSE-optimal subset with high confidence. We propose a variant of the successive elimination algorithm to solve this problem.  We also derive a minimax lower bound to understand the fundamental limit on the sample complexity of this problem.
\end{abstract}

	\begin{IEEEkeywords}
	Mean-squared error, adaptive estimation,  correlated bandits.
	\end{IEEEkeywords}
	

\section{Introduction}
Several real-world applications involve collecting local measurements of a physical phenomenon, and then using the underlying correlation structure to form an estimate  of the physical phenomenon over a wider region. For instance, using sensors to (i) monitor the temperature over a region \cite{guestrin2005near}\; and (ii) detect contamination in a water distribution network \cite{krause08efficient}. Traffic monitoring in a cellular network is another application \cite{utpalcellnetwork}, where the underlying correlation structure plays a major role. In particular, the aim in this application is to collect traffic load measurements from a handful of base stations to form an estimate of the traffic load on all base stations. 

 In this paper\footnote{A two-page extended abstract version of this paper which did not include a detailed problem formulation, nor the MSE estimation/optimization algorithms and analyses,  appeared in ICC 2023 \cite{dipayan2023icc}.} we consider a setting where we have a $K$-dimensional Gaussian distribution with a covariance matrix $\Sigma$, and the goal is to find a subset that best captures the underlying correlation structure. A Gaussian model for studying the correlation structure has been shown to be practically viable in \cite{utpalcellnetwork}.
We employ the mean squared error (MSE) objective to capture the underlying correlation structure. For a $m$-subset $A$, the MSE $\psi(A)$ is given by
 \begin{align}
 	\psi(A) = \mathrm{Tr}\left(\Sigma_{A'A'} -\Sigma_{A'A} \big(\Sigma_{AA}\big)^{-1} \Sigma_{AA'}\right), \label{eq:mse-intro}
 \end{align}
where $A'$ is $[K]\setminus A$, and $\Sigma_{AA}, \Sigma_{A'A'}, \Sigma_{A'A}, \Sigma_{AA'}$ are sub-matrices of $\Sigma$ in obvious notation, and $\mathrm{Tr}$ denote the trace function (See Section \ref{sec:problem} for the details). 

We first consider the problem of estimating the MSE of a $m$-subset, say $A$, given a batch of i.i.d. samples for  the sub-matrices $\{\Sigma_{AA}, \Sigma_{AA'}, \Sigma_{A'A}, \Sigma_{A'A'}\}$. This problem is non-adaptive in the sense that the subset $A$ is fixed, the underlying sub-matrices and their associated samples get fixed as well. In other words, using these samples, it is not straightforward to estimate the MSE of a different $m$-subset. 

From a statistical learning viewpoint, significant progress has been made on the problem of covariance matrix attention (cf. \cite{wainwright-notes}). 
However, the problem of MSE estimation has not received enough attention, and there are no concentration bounds available for the problem of estimating \eqref{eq:mse-intro}, to the best of our knowledge. 
We propose a natural MSE estimator based on sample-averages for the non-adaptive setting. Since the sample average estimator of $\Sigma_{AA}$ may not be invertible, we perform an eigen-decomposition followed by projection of eigenvalues to the positive side. 
Next, we derive a concentration bound for the aforementioned MSE estimator and the bounds that we derive exhibit an exponential tail decay. 

The natural MSE estimator has several disadvantages. First, it is non-adaptive as the estimator is tied to a single $m$-subset, as mentioned before. Second, the concentration bound is not satisfactory both in terms of the constants and the dependence on the number of samples. 
We overcome these issues by designing an efficient unbiased estimator for MSE using a linear regression approach. The resulting estimator is efficient since the MSE of any $m$-subset can be estimated using a given set of samples from the underlying $K$-variate normal distribution. We derive a concentration bound for this estimator. In comparison to the non-adaptive case, this bound features better constants and improved dependence on the number of samples.

We formulate the adaptive estimation problem with bandit feedback in the best-arm identification framework setting \cite{lattimore2020bandit}. To understand the fundamental limit on the sample complexity of this adaptive estimation bandit problem, we derive an information-theoretic lower bound for the special case of $m=2$. We construct a set of covariance matrices that are rich enough to include the least favorable instance for any bandit algorithm. We establish the lower bound using the well-known standard change of measure argument by constructing problem transformations based on the aforementioned set of covariance matrices, but the technical steps require significant deviations in terms of algebraic effort. Moreover, the setting we consider involve sampling more than one arm, which is strictly necessary for estimating the underlying correlation. This sampling change implies additional effort in computing certain KL-divergences, which are then related to the sub-optimality gap in MSEs. 

Finally, we design a successive elimination-type algorithm \cite{even2002pac} to cater to the adaptive estimation of MSE using bandit feedback. This algorithm uses the efficient MSE estimator discussed above, and the concentration bound we derived for this estimator is utilized to form confidence width in the successive elimiation algorithm.
We derive an upper bound on the sample complexity of this algorithm.  

\textit{Related work.} 
Previous works such as \cite{BubeckLiu,boda2019correlated,gupta2020correlated} feature bandit formulations where the underlying correlation structure appears in the objective. 
In \cite{BubeckLiu}, the aim is to find the maximum correlated subset, i.e., a set that has highly correlated members. In contrast, our goal is to find a subset that best captures information about other, as quantified by the MSE objective. In addition, unlike \cite{BubeckLiu}, we do not assume unit variances in the underlying model. Next, in \cite{boda2019correlated}, which is the closest related work, the authors propose an MSE-based objective for a simplified version of the problem where the goal is to find an arm (or $1$-subset) that is most correlated to the remaining $K-1$ arms in the MSE sense. Our problem formulation is more general as we consider MSE of $m$-subsets, with $1\le m \le K$. This generalization leads to bigger technical challenges in MSE estimation and concentration, as well as in the lower bound analysis.
Finally, in \cite{gupta2020correlated}, the authors assume that the arms are correlated through a latent random source, and the objective is to identify the arm with the highest mean. In 
\cite{ErraqabiLVBL17}, the authors study the impact of correlation on the regret, while featuring a regular bandit formulation, i.e.. of identifying the arm with the highest mean.

The rest of the paper is organized as follows:
In Section~\ref{sec:problem}, we formally define the notion of MSE. 
In Section~\ref{sec:naest}, we describe MSE estimation in the non-adaptive setting, while in Section~\ref{sec:effest} we describe our efficient MSE estimation scheme and provide improved concentration bounds. 
In Section \ref{sec:successive-elim}, we present a variant of successive elimination algorithm for solving the adaptive estimation problem in the fixed confidence setting of the best arm identification (BAI) framework. 
In Section \ref{sec:lb}, we present a minimax lower bound on the sample complexity of the adaptive MSE estimation problem with bandit feedback in a BAI setting. 
In Section~\ref{sec:pf}, we present  detailed proofs of the theoretical results in Sections~\ref{sec:naest}--\ref{sec:successive-elim}. 
Finally, in Section~\ref{sec:conclusions} we provide our concluding remarks.
\section{Preliminaries}
\label{sec:problem}
We consider a jointly Gaussian $K$-vector $X=(X_1,\ldots,X_K)$, with mean zero and covariance matrix $\Sigma \triangleq  \E[X\tr X]$: 
\begin{align}
	\Sigma  = \! \left[ \! \begin{array}{c c c c}
		\sigma^2_1 & \rho_{12}\sigma_1\sigma_2  & \ldots & \rho_{1K}\sigma_1\sigma_K \\
		\rho_{12}\sigma_1\sigma_2  & \sigma^2_2 & \ldots  & \rho_{2K}\sigma_2\sigma_K \\
		\vdots & \vdots & \ddots & \vdots \\
		\rho_{1K}\sigma_1\sigma_K  & \rho_{2K}\sigma_2\sigma_K & \ldots & \sigma^2_K   
	\end{array} \! \right], \!
	\label{eq:gauss-covar}
\end{align}
where $\sigma^2_i$, $i \in [K]$ is the variance of arm $i$ and $\rho_{ij}$, $i,j=1,\ldots,K, \ i \neq j,$ the correlation coefficient between arms $i$ and $j$. Here $[n] =\{1,\ldots,n\}$, for any natural number $n$.

Let $\A$ denote the set of subsets of $[K]$ with cardinality $m$.
The mean-squared error (MSE) for a given subset \\$A=\{i_1,\ldots,i_m\} \in \A$ is defined as 
\begin{align}
	\psi(A)  
	&\triangleq\sum_{j=1}^{K} \E[(X_j - \E[X_j|X_{i_1},\ldots,X_{i_m}])^2].\label{eq:mse-def}
\end{align}	
As shown by \cite{hajek2009notes}, the above definition is equivalent to 
\begin{align}
	\mse(A) = \mathrm{Tr} \left (\Sigma_{A'A'} -\Sigma_{A'A} \big(\Sigma_{AA}\big)^{-1} \Sigma_{AA'}\right),
	\label{eq:mse-basic}
\end{align}
where $\mathrm{Tr}$ denotes the trace function, $A' = [K]\setminus A$ is the complement of $A$, $\Sigma_{A'A'}$ (resp. $\Sigma_{AA}$) is the covariance matrix, which is obtained by restricting $\Sigma$ to the set $A'$ (resp. $A$).

%
%
%

In next two sections, we describe the MSE estimation problem in the non-adaptive and adaptive settings, respectively. Subsequently, we present lower and upper bounds for the correlated bandit problem with an MSE objective.
\section{Non-adaptive MSE estimation}
\label{sec:naest}
From \eqref{eq:mse-basic}, it is apparent that one can form an estimate of $\mse(A)$ using estimates of the sub-matrices  $\Sigma_{AA},\Sigma_{A'A},\Sigma_{AA'}$, and $\Sigma_{A'A'}$. In the non-adaptive setting, we are given i.i.d.\;samples for each of the sub-matrices $\Sigma_{AA}, \Sigma_{A'A'}, \Sigma_{A'A}, $ and $\Sigma_{AA'}$, for a given subset $A$. Using these samples from the underlying multivariate Gaussian distribution, we form the sample covariance matrices $\widehat \Sigma_{AA}$, $\widehat \Sigma_{AA'}$, $\widehat \Sigma_{A'A}$, and $\widehat \Sigma_{A'A'}$ to estimate the aforementioned four sub-matrices. 
 
The `sample-average' estimator $\widehat \Sigma_{AA}$ is not guaranteed to be invertible (though it is positive definite with high probability), while MSE estimation requires an estimate of $\Sigma_{AA}^{-1}$. To handle invertibility, we form the matrix $\widehat{\Sigma_{AA}}^{+}$ by performing an eigen-decomposition of $\widehat \Sigma_{AA}$, followed by a projection of eigenvalues to the positive side.  Formally, for $i=1,\ldots,m$, let $\hat{\lambda}_i$ denote the eigenvalue of  $\widehat \Sigma_{AA}$, with corresponding eigenvector $v_i$. The estimator $\widehat{\Sigma}^+_{AA}$ is defined by
\begin{align}
	& \widehat{\Sigma}_{AA}^{+} \triangleq \sum\limits_{i = 1}^{m} \hat{\lambda}_{i}^{+} v_{i} v_{i}^\intercal \label{eq:sigma-hat-plus},
\end{align}
where $\hat{\lambda}_{i}^{+} = \begin{cases}
    \hat{\lambda}_{i} & \textrm{ if } |\hat{\lambda}_{i}| \ge \zeta,\\
    \zeta & \textrm{otherwise},
\end{cases}$
for $i=1,\ldots,m$. 
It is easy to see that $\widehat{\Sigma}_{AA}^{+}$ is positive definite.

The MSE $\psi(A)$ associated with set $A$ is then estimated as follows:
\begin{align}
	\widehat{\mse}(A) \triangleq \mathrm{Tr} \left(\widehat{\Sigma_{A'A'}} - \widehat{\Sigma_{A'A}} \big(\widehat{\Sigma_{AA}}^{+}\big)^{-1} \widehat{\Sigma_{AA'}}\right).\label{eq:mse-est-basic}
\end{align}
Next, we proceed to analyze the concentration properties of the estimator defined above.
For the sake of analysis, we make the following assumptions:
\begin{assumption}
	\label{ass:variance}
$0 < l = \underset{i}{\min} \; \sigma_{i}^2 , \sigma_i^2 \le 1$ for $i=1,\ldots,K$.
\end{assumption}

\begin{assumption}
	\label{ass:eigenvalue}
$\max\left(||\Sigma_{AA}||_{2},||\Sigma_{A'A'}||_{2} \right) \leq M_{0},$ and $||\Sigma_{AA}^{-1}||_{2} \leq \frac1{M_1}$, where $|| \cdot ||_2$ is the operator norm.
\end{assumption}
Assumption \ref{ass:variance} is used for the simpler $1$-subset MSE estimation by \cite{boda2019correlated}, while \ref{ass:eigenvalue} is common in the analysis of covariance matrix estimates (cf. \cite{Cai_2010}).
We now present a concentration bound for the MSE estimator \eqref{eq:mse-est-basic}.
\begin{proposition}[\textbf{\textit{MSE concentration: Non-adaptive case}}]
	\label{prop:nonadaptive-mse-conc}
	Assume \ref{ass:variance} and \ref{ass:eigenvalue}. 
 Let $n_{AA},n_{A'A},n_{AA'}, n_{A'A'} $ denote the number of samples used to form $\widehat\Sigma_{AA}, \widehat\Sigma_{A'A}, \widehat\Sigma_{AA'}$, and $\widehat\Sigma_{A'A'}$, respectively. 
 Set the projection parameter $\zeta$ in \eqref{eq:sigma-hat-plus} as follows:
 \[\zeta= M_{0} \min \bigg( \sqrt{\frac{m+\log(\frac{1}{\delta})}{n_{AA}}},\frac{m+\log(\frac{1}{\delta})}{n_{AA}}\bigg).\]
where $\delta \in (0,1) $ denotes the confidence width. Let $n'=\min\left(n_{AA}, n_{A'A'}, n_{AA'}, n_{A'A}\right)$.

 Then, for any $0 < \epsilon < \eta \triangleq  \min\left(2K,\lambda_{min}(\Sigma_{AA})\right)$\footnote{$ \lambda_{\min}(AA)$ denotes the smallest eigenvalue of the matrix $AA$. }, the MSE estimate $\widehat{\mse}(A)$ defined by \eqref{eq:mse-est-basic} satisfies
	\begin{align}
		& \mathbb{P}\left(|\widehat{\mse}(A)-\mse(A)| \geq \epsilon \right)  \leq  
		\underbrace{ C_0
		\exp \left( \frac{-n'}{m K^2 (1+\eta)^3}  \min \left(\frac{\epsilon}{12 G_{0}},\frac{\epsilon^2}{G_{0}^2}\right)\right)}_{(I)} 
		+ \underbrace{2 m K \exp \left( - \frac{n' \epsilon^2}{72 \; C_{3}^2 m^2 K^4 (1+\eta)^3} \right)}_{(II)} , \label{eq:non-adaptive-conc}
	\end{align}	
where  $ c= \frac1{[\lambda_{\min}(\Sigma_{AA})- \epsilon]}$,
$C_0 = \left[ 13 \text{ mK} + e^{m + K}\right],$ 
    $C_{1} = \frac{108 \sqrt{2}}{l} \left(c + \frac{1}{M_{1}}\right),$
    $C_{2} = \bigg(\frac{3 c M_{0}}{M_1}\bigg), $
    $C_{3} = \left(c + \frac{1}{M_{1}}\right),$ and
    $G_{0} = \max \Big(\sqrt{m (K - m)^2} (1 + \eta) \; C_{1}, \sqrt{m (K - m)^2} (1 + \eta) \; C_{2}, 3 (K - m) M_{0} \Big).$
\end{proposition}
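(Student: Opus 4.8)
The plan is to bound $|\widehat{\mse}(A) - \mse(A)|$ by expanding it into a small number of matrix-perturbation terms, controlling each on a high-probability event, and then converting the resulting $\delta$-confidence statements into the $\epsilon$-form claimed in \eqref{eq:non-adaptive-conc}. First I would write, using \eqref{eq:mse-basic} and \eqref{eq:mse-est-basic}, $\widehat{\mse}(A) - \mse(A) = \mathrm{Tr}(\widehat{\Sigma_{A'A'}} - \Sigma_{A'A'}) - \mathrm{Tr}\big(\widehat{\Sigma_{A'A}}(\widehat{\Sigma_{AA}}^{+})^{-1}\widehat{\Sigma_{AA'}} - \Sigma_{A'A}\Sigma_{AA}^{-1}\Sigma_{AA'}\big)$ and telescope the product term as
\begin{align*}
\widehat{\Sigma_{A'A}}(\widehat{\Sigma_{AA}}^{+})^{-1}\widehat{\Sigma_{AA'}} - \Sigma_{A'A}\Sigma_{AA}^{-1}\Sigma_{AA'}
&= (\widehat{\Sigma_{A'A}} - \Sigma_{A'A})(\widehat{\Sigma_{AA}}^{+})^{-1}\widehat{\Sigma_{AA'}} \\
&\quad + \Sigma_{A'A}\big((\widehat{\Sigma_{AA}}^{+})^{-1} - \Sigma_{AA}^{-1}\big)\widehat{\Sigma_{AA'}} \\
&\quad + \Sigma_{A'A}\Sigma_{AA}^{-1}(\widehat{\Sigma_{AA'}} - \Sigma_{AA'}).
\end{align*}
Using $|\mathrm{Tr}(M)| \le d\,\|M\|_2$ for a $d\times d$ matrix and submultiplicativity of $\|\cdot\|_2$, each resulting term reduces to a product of a sample-submatrix deviation ($\|\widehat{\Sigma_{AA}}-\Sigma_{AA}\|_2$, $\|\widehat{\Sigma_{AA'}}-\Sigma_{AA'}\|_2$, $\|\widehat{\Sigma_{A'A}}-\Sigma_{A'A}\|_2$, or $\mathrm{Tr}(\widehat{\Sigma_{A'A'}}-\Sigma_{A'A'})$), operator norms of the true submatrices (bounded by $M_0$ and $1/M_1$ under Assumption~\ref{ass:eigenvalue}), and, in the middle term, the key quantity $\|(\widehat{\Sigma_{AA}}^{+})^{-1} - \Sigma_{AA}^{-1}\|_2$.

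The central estimate is the control of $(\widehat{\Sigma_{AA}}^{+})^{-1}$. I would split $(\widehat{\Sigma_{AA}}^{+})^{-1} - \Sigma_{AA}^{-1} = \big((\widehat{\Sigma_{AA}}^{+})^{-1} - \widehat{\Sigma_{AA}}^{-1}\big) + \big(\widehat{\Sigma_{AA}}^{-1} - \Sigma_{AA}^{-1}\big)$ (the second summand on the event that $\widehat{\Sigma_{AA}}$ is invertible), apply the resolvent identity $B^{-1} - C^{-1} = B^{-1}(C - B)C^{-1}$ to both pieces, and bound the eigenvalue-clipping perturbation $\|\widehat{\Sigma_{AA}}^{+} - \widehat{\Sigma_{AA}}\|_2 \le 2\zeta$ directly from \eqref{eq:sigma-hat-plus}. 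For the conditioning I would invoke the standard sample-covariance bound for Gaussian data: with probability at least $1 - \delta$, $\|\widehat{\Sigma_{AA}} - \Sigma_{AA}\|_2 \le M_0\min\big(\sqrt{(m+\log(1/\delta))/n_{AA}},\,(m+\log(1/\delta))/n_{AA}\big)$, proved via an $\varepsilon$-net over the unit sphere (this is where the $e^{m+K}$ factor and the $\min(\sqrt{\cdot},\cdot)$ structure in term $(I)$ originate), and analogously for $\widehat{\Sigma_{A'A'}},\widehat{\Sigma_{AA'}},\widehat{\Sigma_{A'A}}$. On the intersection of these events with $\{\lambda_{\min}(\widehat{\Sigma_{AA}}) \ge \lambda_{\min}(\Sigma_{AA}) - \epsilon\}$ — which is exactly why the constant $c = 1/[\lambda_{\min}(\Sigma_{AA}) - \epsilon]$ and the restriction $\epsilon < \eta$ appear — the clipping keeps the smallest eigenvalue of $\widehat{\Sigma_{AA}}^{+}$ at least $\max(\zeta,\lambda_{\min}(\Sigma_{AA}) - \epsilon)$, so $\|(\widehat{\Sigma_{AA}}^{+})^{-1}\|_2 \le c$, the resolvent bounds go through, and one obtains $\|(\widehat{\Sigma_{AA}}^{+})^{-1} - \Sigma_{AA}^{-1}\|_2$ bounded by a numerical constant times $(c + 1/M_1)\big(\zeta + \|\widehat{\Sigma_{AA}} - \Sigma_{AA}\|_2\big)$; substituting the prescribed $\zeta$ makes this of the same order as the covariance deviation, and produces the constants $C_1, C_2, C_3$.

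Finally, the diagonal term $\mathrm{Tr}(\widehat{\Sigma_{A'A'}} - \Sigma_{A'A'}) = \sum_{j \in A'}\big(\tfrac1{n_{A'A'}}\sum_t X_{j,t}^2 - \sigma_j^2\big)$ is a sum of centered $\chi^2$-type (sub-exponential) variables, handled by a Bernstein inequality and contributing another $\exp(-n'\min(\epsilon,\epsilon^2))$-type term, while the entrywise deviations of the off-diagonal blocks $\widehat{\Sigma_{AA'}},\widehat{\Sigma_{A'A}}$ (averages of products $X_iX_j$) give, after a union bound over the $m(K-m)$ entries, the $2mK\exp(-n'\epsilon^2/\cdots)$ term $(II)$. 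I would then take a union bound over all the bad events, distribute the target deviation $\epsilon$ across the finitely many summands — this is what yields the aggregate $G_0$ together with the polynomial factors $mK^2$, $m^2K^4$, and $(1+\eta)^3$ coming from the worst-case allocation — solve each $\delta$-statement for $\delta$ in terms of $\epsilon$, and collect everything into \eqref{eq:non-adaptive-conc}. The main obstacle I anticipate is precisely the bookkeeping around $(\widehat{\Sigma_{AA}}^{+})^{-1}$: one must simultaneously guarantee that $\widehat{\Sigma_{AA}}^{+}$ is well-conditioned on a high-probability event of the right form, that the eigenvalue clipping is negligible for the chosen $\zeta$, and that all three matrix deviations propagate through the triple product while keeping the dimensional dependence ($m$ versus $K-m$ versus $K$) tight enough to match the stated exponents — each routine in isolation but delicate to combine into a single clean bound.
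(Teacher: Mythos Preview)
Your plan is essentially the paper's proof: the same trace--telescoping decomposition (the paper telescopes in the opposite order, which is immaterial), the same resolvent bound $\|(\widehat{\Sigma_{AA}}^{+})^{-1}-\Sigma_{AA}^{-1}\|_2\le \|(\widehat{\Sigma_{AA}}^{+})^{-1}\|_2\,\|\widehat{\Sigma_{AA}}^{+}-\Sigma_{AA}\|_2\,\|\Sigma_{AA}^{-1}\|_2$ combined with $\|\widehat{\Sigma_{AA}}^{+}-\widehat{\Sigma_{AA}}\|_2\le 2\zeta$, the same good event $\{\|(\widehat{\Sigma_{AA}}^{+})^{-1}\|_2\le c\}$ controlled via Weyl's inequality, and the same Rigollet-type spectral bound on $\widehat{\Sigma_{AA}}$.

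Two implementation details differ from the paper and are worth noting so that you recover the \emph{stated} constants. First, the paper does not apply the $\varepsilon$-net spectral bound to the rectangular cross blocks $\widehat{\Sigma_{AA'}},\widehat{\Sigma_{A'A}}$; instead it bounds $\|\widehat{\Sigma_{AA'}}-\Sigma_{AA'}\|_2$ through the Frobenius norm and an entrywise split $\hat\rho_{ji_k}\hat\sigma_{i_k}\hat\sigma_j-\rho_{ji_k}\sigma_{i_k}\sigma_j = \hat\rho\hat\sigma_{i_k}(\hat\sigma_j-\sigma_j)+\hat\rho\sigma_j(\hat\sigma_{i_k}-\sigma_{i_k})+\sigma_{i_k}\sigma_j(\hat\rho-\rho)$, using sub-exponential concentration for the $\hat\rho$-piece (this is the source of the prefactor $13m(K-m)$ in $C_0$ and of the $l$-dependence in $C_1$), while the $\hat\sigma$-pieces give the pure-$\epsilon^2$ term $(II)$ with constant $C_3$. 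Second, the trace term $\mathrm{Tr}(\widehat{\Sigma_{A'A'}}-\Sigma_{A'A'})$ is handled in the paper via the spectral bound on $\widehat{\Sigma_{A'A'}}$ (contributing the $e^{K-m}$ inside $C_0$), not by a separate Bernstein argument on $\chi^2$ sums. Both of your alternatives would also work and yield a bound of the same shape, but with different constants than those in \eqref{eq:non-adaptive-conc}.
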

\begin{proof}
	See Section \ref{sec:proof-nonadaptive-conc}. 
\end{proof}
In the result above, we have $\epsilon \le 2K$, and this constraint is not restrictive since the MSE $\psi(A) \le K$ for any subset $A$ as a consequence of \ref{ass:variance}. 

	To understand the terms (I) and (II) in \eqref{eq:non-adaptive-conc}, we have to look at the following decomposition of the MSE estimation error:	
	\begin{align}
		 \widehat{\mse}(A)-\mse(A) &=  \Tr \left(\left(\widehat{\Sigma_{A'A'}} - \Sigma_{A'A'}\right) -
		\left(
		\widehat{\Sigma_{A'A}} \left(\widehat{\Sigma_{AA}}^{+}\right)^{-1}  \left(\widehat{\Sigma_{AA'}} - \Sigma_{AA'}\right) \right. \right. \nonumber\\
  &\left. \left. \qquad \quad  + \widehat{\Sigma_{A'A}} \left(\left(\widehat{\Sigma_{AA}}^{+}\right)^{-1} - \Sigma_{AA}^{-1}\right) \Sigma_{AA'} + \left(\widehat{\Sigma_{A'A}} - \Sigma_{A'A}\right)\Sigma_{AA}^{-1} \Sigma_{AA'}\right)
		\right). \label{eq:s1_first_main_}
	\end{align}
The first and third terms on the RHS above relate to estimation of a covariance matrix and its inverse. These terms lead to the term (I) in the bound \eqref{eq:non-adaptive-conc} above. On the other hand the second and fourth terms on the RHS above relate to concentration of sample standard deviation and sample correlation coefficient, in turn leading to the term (II) in the bound \eqref{eq:non-adaptive-conc}.

The estimator \eqref{eq:mse-est-basic} has two limitations. First, the estimation scheme underlying \eqref{eq:mse-est-basic} is tied to a particular subset, making it non-adaptive since moving between subsets would entail fresh sampling. 
Second, the constant \( C_0 \)  appearing in the bound of Proposition \ref{prop:nonadaptive-mse-conc} grows exponentially with \( m \) and \( K \), making the upper bound unrealistically large. Furthermore, the bound in Proposition \ref{prop:nonadaptive-mse-conc}  exceeds one when the sample size \( n \) is smaller than \( (m+K)mK^2 \), implying the bound is applicable for large sample sizes only. 
We overcome these limitations using an efficient MSE estimator described in the next section.
\section{Efficient MSE estimation}
\label{sec:effest}

We first show that the problem of getting an efficient estimator for \( \psi(A) \) for a \( K \)-arm problem can be related to a linear regression problem. In the regression framework, we characterize the distribution of the residual sum of squares, which in turn leads to an unbiased estimator for \( \psi(A) \). Subsequently, we analyze the concentration properties of this estimator.

Recall that we have a $K$-variate Gaussian vector with mean zero and covariance \( \Sigma \). Focusing on an arbitrary arm \( j \), we analyze its distribution with respect to an \( m \)-subset \( A \). Consider the random vector  
\[
X = \begin{bmatrix} x_j \\ x_A \end{bmatrix},
\]
where \( x_j \) is a scalar random variable corresponding to arm \( j \), and \( x_A \) is an \( m \)-dimensional random vector associated with the subset \( A \). The distribution of \( X \) is multivariate Gaussian (Section 3.4 of \cite{hajek2009notes}) and can be expressed as  
\[
X \sim \mathcal{N} \left(
\begin{bmatrix} 0 \\ 0 \end{bmatrix},
\begin{bmatrix} \Sigma_{jj} & \Sigma_{jA} \\ \Sigma_{Aj} & \Sigma_{AA} \end{bmatrix}
\right),
\]
where the covariance matrix components have dimensions \( \Sigma_{jj}: 1 \times 1 \), \( \Sigma_{jA}: 1 \times m \), \( \Sigma_{Aj}: m \times 1 \), and \( \Sigma_{AA}: m \times m \), respectively. From properties of multivariate Gaussian distributions (cf. Theorem 3 of \cite{holt2023bayesian}), the conditional distribution of \( x_j\) given \({x_A} \) is  given by
\[
x_j \mid x_A \sim \mathcal{N}(\mu_j, \gamma_j^2),
\textrm{ where  }
\mu_j = \Sigma_{jA} \Sigma_{AA}^{-1} x_A, \quad \gamma_j^2 = \Sigma_{jj} - \Sigma_{jA} \Sigma_{AA}^{-1} \Sigma_{Aj}.
\]
Since \( \psi(A) = \sum_j \gamma_j^2 \), estimating \( \gamma_j^2 \) for all \( K \) arms and summing them yields an estimator for \( \psi(A) \).
We shall see that such an approach leads to an efficient estimator that can work for any $m$-subset $A$ using the same set of samples from the underlying $K$-variate Gaussian distribution.  

\textbf{MSE estimation using a regression framework.}  
 We now focus on estimating the parameter $\gamma_j^2$, for any $j=1,\ldots,K$. 
 Given \( n \) i.i.d. observations of \( x_j \) and \({x_A} \), let \( X_j \in \mathbb{R}^{n \times 1} \) represent the \( n \)-dimensional vector of observed values for \( x_j \), and let \( X_A \in \mathbb{R}^{n \times m} \) denote the \( n \times m \) matrix containing the corresponding observations for \( {x_A} \).  
The relationship between \( X_j \) and \( X_A \) is given by
\[
X_j = X_A \beta_j + \epsilon_j, \textrm{ where } \epsilon_j \sim \mathcal{N}(0, \gamma_j^2 I_n).
\]
The Ordinary Least Squares (OLS)  estimator is given by  
\[
\hat{\beta}_j = (X_A^\top X_A)^{-1} X_A^\top X_j.
\]
The OLS estimator is well defined and the matrix \( X_A^\top X_A \) is almost surely invertible, when the random variables \( X_{ij} \) (entries of the matrix \( X_A \)) have a bounded density, cf. \cite[Section 6]{carpentier2022estimation}. In our setting, the latter requirement is easily met since the underlying distribution is assumed to be multivariate Gaussian.

%


We define the projection of \( X_j \) onto the column space of \( X_A \) as
\[
\hat{X_j} = P_A X_j, \textrm{ where } P_A = X_A (X_A^\top X_A)^{-1} X_A^\top.
\]
The residual error \( R_j = \hat{X_j} - X_j = X_A (\hat{\beta_j} - \beta_j) - \epsilon_j \). Thus, the estimator \( \hat{\beta_j} \) can be rewritten as
\[
\hat{\beta_j} = \beta_j + (X_A^\top X_A)^{-1} X_A^\top \epsilon_j.
\]
From the foregoing, \( \hat{\beta_j} - \beta_j = (X_A^\top X_A)^{-1} X_A^\top \epsilon_j \), and the residual can be rewritten as 
\[
R_j = -(I_n - P_A)\, \epsilon_j.
\]

The matrix \( (I_n - P_A) \) is idempotent, symmetric, and has rank \( n - m \), with trace \( \text{Tr}(I_n - P_A) = n - m \). Therefore, the quadratic form \( \frac{R_j^\top R_j}{\gamma_j^2} \) has a scaled chi-squared distribution, i.e., satisfies
\[
\frac{R_j^\top R_j}{\gamma_j^2} \sim \chi^2_{n - m}, \quad \mathbb{E}\left( \frac{R_j^\top R_j}{\gamma_j^2} \right) = n - m,
\]
where $\chi^2_{n - m}$ denotes a chi-squared distribution with $(n-m)$ degrees of freedom.
The expression for expectation above is equivalent to 
\[
\mathbb{E}\left( \frac{\|R_j\|^2}{n - m} \right) =\mathbb{E}\left( \frac{\|X_j - P_A X_j\|^2}{n - m} \right) = \gamma_j^2.
\]
The expression above in conjunction with the fact that $\psi(A)= \sum_{j=1}^K \gamma_j^2$ leads to the following estimator 
\begin{equation}
\label{eq:mse_cleverest}
\widetilde{\psi}(A) = \sum_{j=1}^K \frac{X_j^\top (I_n - P_A) X_j}{n - m}.
\end{equation}

It is easy to see that \( \widetilde{\psi}(A) \) is an unbiased estimator for \( \psi(A) \). We now present an improved concentration bound  for this estimator. 

\begin{proposition}[\textbf{\textit{MSE concentration: Adaptive case}}]
\label{prop:adaptive-mse-conc}
Assume \ref{ass:variance}
For a given $m$-subset $A$, form the MSE estimator $\widetilde{\psi}(A)$ from \eqref{eq:mse_cleverest}, after drawing $n$ i.i.d. samples from the underlying $K$-variate Gaussian distribution. Then, for any $\epsilon>0$, we have
\begin{align}
\mathbb{P} \left( \left| \widetilde{\psi}(A) - \psi(A) \right| > \epsilon \right) &\leq 2K \exp \left[ 
    - \frac{(n-m)}{8} \min \left( 
    \frac{9\epsilon^2}{K^2}, 
    \frac{3\epsilon}{K} 
    \right) 
    \right], \label{eq:fin_bound}
\end{align}
where $c_{1}$ is a universal constant.
\end{proposition}
\begin{proof}
	See Section \ref{sec:proof-adaptive-conc}. 
\end{proof}

\begin{remark}[Comparison with the non-adaptive case]
The bound in \eqref{eq:fin_bound}  features several improvements over the one in \eqref{prop:adaptive-mse-conc}.
First,  the constant \( 2K \) multiplying the exponential factor is a significant improvement over the constant \( C_0 = 13mK + e^{m+K} \) in \eqref{eq:non-adaptive-conc}. 
Further, the bound in \eqref{eq:fin_bound} does not exceed  one when \( n < (m+K)mK^2 \), unlike \eqref{eq:non-adaptive-conc}. 
Second, the bound in \eqref{eq:fin_bound} leads to a better sample complexity.
The previous bound required a sample size of \\\( n' = \min(n_{AA'}, n_{A'A}, n_{AA}, n_{A'A'}) \), with separate samples needed for estimating each covariance matrix in \eqref{eq:mse-def}. In contrast, the current approach simplifies the sample complexity to \( nK \), requiring just \( n \) samples for each of the \( K \) arms. This streamlined approach reduces the total sample requirement significantly.
Third, Proposition \ref{prop:adaptive-mse-conc} does not require \ref{ass:eigenvalue}, unlike Proposition \ref{prop:nonadaptive-mse-conc}. Assumption \ref{ass:eigenvalue} is restrictive, limiting the applicability of the bound in \eqref{eq:non-adaptive-conc}. 
\end{remark}

\section{Bandits with MSE objective:  Upper bound}
\label{sec:successive-elim}
We consider the fixed confidence variant of the best-arm identification framework \cite{lattimore2020bandit}. In this setting, the interaction of a bandit algorithm with the environment is given below.
\begin{figure}[H] \label{fig:fig1}
	\fbox{
		\begin{minipage}{0.9\textwidth}
			{\bfseries Adaptive estimation with bandit feedback}\ \\[0.5ex]
			{\bfseries Input}: set of $m$-subsets $\A$. \\
			{\bfseries For all $t=1,2,\ldots,$ repeat} 
			\begin{enumerate}
				\item Observe a sample $K$-vector from the underlying multivariate Gaussian distribution.
				\item Choose to continue, or stop and output an $m$-subset.
			\end{enumerate}
		\end{minipage}
	}
	\label{fig:flow}
\end{figure}
Notice that the bandit interaction involves sampling the $K$-vector and not a particular $m$-subset in each round.
The rationale behind such a formulation is that given the $K$-vector, using the efficient estimator from Section \ref{sec:effest}, we can estimate the MSE of any subset $A \in \A$. The non-adaptive estimator from Section \ref{sec:naest} does not facilitate such an interaction, as it requires sampling sub-matrices of the covariance matrix for a given subset.

A subset that has the lowest MSE is considered optimal, i.e., 
\[ A^*  \in \argmin\limits_{A \in \A} \mse(A). \]
The aim in this setting is to devise an algorithm that outputs the best $m$-subset with high probability, while using a low number of samples.  More precisely, for a given confidence parameter $\delta\in (0,1)$, an algorithm is $\delta$-PAC if it stops after $\tau$ rounds, and outputs a set $A_\tau$ that satisfies $\Prob{A_\tau \neq A^*} \le \delta$. Among $\delta$-PAC algorithms, the algorithm with minimum sample complexity $\E [\tau]$ is preferred.

For any set $A$, define 
\begin{align}
	\Delta(A) \triangleq \psi(A) - \psi(A^*), \textrm{ and } \Delta = \min_{A\in \A} \Delta(A).\label{eq:gap}
\end{align} 
In the above, $\Delta(A)$ denotes the gap in MSE associated with a subset $A$, while $\Delta$ denotes the smallest gap. The upper and lower bounds that we derive subsequently features these quantities.
In the fixed confidence setting that we consider, a naive algorithm based on Algorithm 1 in \cite{even2002pac} would pull each subset equal number of times. Such an uniform sampling will be useful if all the subsets can capture the same amount of information about other subsets, i.e., when the underlying correlations and the variances are similar. However, with uneven correlations, uniform sampling does not make sense. The possible set of candidates for the most informative subset need to sampled more than the other subsets in order to reduce the probability of error in identifying the best $m$-subset, and successive elimination \cite{even2002pac} is an approach that embodies this idea.

We propose a variant of the successive elimination algorithm that is geared towards finding the best $m$- subset under the MSE objective. 
The algorithm maintains an active set, which is initialized to the set of all $m$-subsets $\A$.  In each round $t$, the algorithm pulls each active $m$-subset once, and its MSE is estimated using  \eqref{eq:mse_cleverest}. Following this, the algorithm eliminates all subsets whose confidence intervals are clearly separated from the confidence interval of the empirically best subset seen so far, i.e., the one with the least MSE estimate. The algorithm terminates when there is only  one $m$-subset left in the active set, and this event occurs with probablity one.

For deriving the confidence width $\alpha_t$ used in the successive elimination algorithm for correlated bandits (see Figure \ref{fig:flow_se} below), we start by deriving an alternative form of the bound on the MSE estimate stated in Proposition \ref{prop:adaptive-mse-conc}.
\begin{align*}
\mathbb{P}\left(|\widehat{\mse}(A)-\mse(A)| \geq \epsilon \right)
&\leq  2K \exp \left( 
    - \frac{(n-m)}{8} \min \left( 
    \frac{9\epsilon^2}{K^2}, 
    \frac{3\epsilon}{K} 
    \right) 
    \right)  \\
& \leq 2K \exp \left( - \frac{(n-m)}{8}  \left( \frac{\epsilon^2}{\frac{K^2}{9} + \frac{K\epsilon}{3}} \right) \right) \\
\end{align*}

Thus, with w.p. $1 - \delta$, we obtain
\begin{align}
& |\widehat{\mse}(A)-\mse(A)|  \leq \left(\frac{8K \log \left(\frac{2K}{\delta}\right)}{3(n-m)}  + \frac{K}{3}\sqrt{\frac{8\log \left(\frac{2K}{\delta}\right)}{(n-m) }}\right). 
\label{eq:inv-prop2-bound-refined}
\end{align}
Now, from \eqref{eq:inv-prop2-bound-refined}, we obtain the following form for the confidence width $\alpha_t$, which is used in the successive elimination algorithm for correlated bandits (see Figure \ref{fig:flow_se} below):
\begin{align}
\alpha_{t} = \frac{8\;\binom{K}{m} \log \left(\frac{2 \; \binom{K}{m}t^2}{\delta}\right)}{3  \; t} + \frac{ \binom{K}{m}}{3}
\sqrt{ \frac{8\log \left(\frac{2 \; \binom{K}{m}t^2}{\delta}\right)}{ t}}.\label{eq:alphat_appendix}
\end{align}

		\begin{figure}[H] 
	\fbox{%
		\begin{minipage}{\dimexpr\linewidth-2\fboxsep-2\fboxrule\relax}
			{\bfseries Successive elimination for correlated bandits}\ \\[0.5ex]
			{\bfseries Input}: set of all $m$-subsets $\A , |\A| = \binom{K}{m} , \delta > 0$. \\
			{\bfseries Initialization}: set of active subsets $\S = \A$. Pull each subset of $\S$ $m$-times.\\
			{\bfseries For all $t=1,2,\ldots$ , repeat}\\
			\begin{enumerate}
			    \item Observe a sample $K$-vector from  the underlying multivariate Gaussian distribution.
				\item For each active $m$-subsets $A_t \in \S$, update the MSE estimate $\widehat{\psi}_{A_{t}}$ with the additional sample obtained above. Each active subset's MSE estimate $\widehat{\psi}_{A_{t}}$ would be formed using $t$ samples of the underlying Gaussian distribution.
				\item Remove those subsets from $\mathcal{S}$ such that \\
				$\widehat{\psi}_{A_{t}^*} - \widehat{\psi}_{A_{t}} \geq 2 \alpha_{t}$,
				where $\alpha_{t}$ is defined in \eqref{eq:alphat_appendix} and 
    $A_{t}^*$ is any active optimal subset at time $t$ with minimum MSE , i.e., $A_{t}^*  \in \argmin\limits_{A_{t} \in \mathcal{S}} \psi(A_{t}).$
				\item Continue until there is only one active $m$-subset in $\mathcal{S}$.
			\end{enumerate}
		\end{minipage}%
		}
	\caption{Operational flow of successive elimination for correlated bandits.}
	\label{fig:flow_se}
\end{figure}
We now present a bound on the sample complexity of the successive elimination algorithm for correlated bandits.
	\begin{theorem}[\textbf{\textit{Sample complexity bound}}]
	\label{thm:se}
	Assume \ref{ass:variance} and \ref{ass:eigenvalue} for every $A \in \A$.
	The successive elimination algorithm is $(0, \delta)$-PAC for any $\delta \in (0,1)$, and w.p. at least $1 - \delta$, it's sample complexity is bounded by
	 \begin{align*}
     &\mathcal{O} \left( \frac{1}{\Delta_{\textrm{min}}} \log \left(\frac{\binom{K}{m} \log \left(\Delta^{-1}\right)}{\delta}\right)\right),
 \end{align*}
where $\Delta = \min_{A \in \A} \Delta(A)$ denotes the smallest gap. 
	\end{theorem}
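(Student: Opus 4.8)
\emph{Proof approach.} The plan is to run the classical successive-elimination argument (\citep{even2002pac}; see also \citep{lattimore2020bandit}) with the $m$-subsets in $\A$ playing the role of the ``arms'' and with the adaptive MSE concentration bound of Proposition~\ref{prop:adaptive-mse-conc} supplying the per-arm deviation inequality. Throughout, $\widehat{\psi}(A)$ denotes the current MSE estimate \eqref{eq:mse-hat-expanded} of a subset $A$, and I write $A_t^\star$ for an active subset attaining the smallest current MSE estimate in round $t$, so that the rule of Figure~\ref{fig:flow_se} discards $A$ once $\widehat{\psi}(A)$ exceeds $\widehat{\psi}(A_t^\star)$ by at least $2\alpha_t$.

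First I would set up a single good event $\mathcal{E}$ on which every estimate stays inside its confidence window at every round. Since each active subset is sampled exactly once per round, a subset that has survived $t$ rounds has been pulled $t$ times; as one pull of $A$ yields a joint sample of all the variances and correlations indexed by $A$, the effective sample size $n''$ of $\widehat{\psi}(A)$ after $t$ active rounds satisfies $n'' \ge t$. Substituting $n'' \ge t$ and the confidence width $\alpha_t$ of \eqref{eq:alphat_appendix} into \eqref{eq:prop2-bound-refined-appendix}--\eqref{eq:inv-prop2-bound-refined} shows that $\{|\widehat{\psi}(A) - \psi(A)| \le \alpha_t\}$ fails with probability at most $\delta/(2\binom{K}{m} t^2)$ for each fixed pair $(A,t)$; a union bound over the $\binom{K}{m}$ subsets and over $t \ge 1$ (using $\sum_{t \ge 1} t^{-2} < 2$) then gives $\mathbb{P}(\mathcal{E}) \ge 1-\delta$, where $\mathcal{E} = \{\,|\widehat{\psi}(A) - \psi(A)| \le \alpha_t \text{ for all } A \in \A,\ t \ge 1\,\}$. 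Checking that the constants baked into \eqref{eq:alphat_appendix} absorb the prefactor $K(30m^2 - 13m + 1)$ of \eqref{eq:prop2-bound-refined-appendix} together with the two union-bound factors is routine bookkeeping.

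Next I would argue deterministically on $\mathcal{E}$. The optimal subset $A^*$ is never eliminated: whenever $A^*$ and $A_t^\star$ are both active, $\widehat{\psi}(A^*) - \widehat{\psi}(A_t^\star) \le (\psi(A^*) + \alpha_t) - (\psi(A_t^\star) - \alpha_t) \le \psi(A^*) - \psi(A_t^\star) + 2\alpha_t \le 2\alpha_t$, since $A^*$ globally minimizes $\psi$, so the removal test is never met for $A^*$. Conversely, a suboptimal subset $A$ (i.e.\ $\Delta(A) > 0$) that is still active at round $t$ satisfies $\widehat{\psi}(A) - \widehat{\psi}(A_t^\star) \ge \widehat{\psi}(A) - \widehat{\psi}(A^*) \ge (\psi(A) - \alpha_t) - (\psi(A^*) + \alpha_t) = \Delta(A) - 2\alpha_t$, and hence is removed as soon as $\alpha_t \le \Delta(A)/4$. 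Thus on $\mathcal{E}$ the algorithm terminates with $\mathcal{S} = \{A^*\}$ and outputs $A^*$, which, combined with $\mathbb{P}(\mathcal{E}) \ge 1-\delta$, gives the $(0,\delta)$-PAC claim. Moreover each suboptimal $A$ is active for at most $T_A := \min\{\,t \ge 1 : \alpha_t \le \Delta(A)/4\,\}$ rounds, while $A^*$ is active for at most $\max_{A \ne A^*} T_A$ rounds; since one sample is drawn per active subset per round, the total sample count $\sum_t |\mathcal{S}_t|$ equals $\sum_{A \in \A}$ (number of rounds $A$ is active), which is at most $2\sum_{A \ne A^*} T_A$.

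It remains to bound $T_A$. The inequality $\alpha_t \le \Delta(A)/4$ is self-referential because $t$ appears inside the logarithm of \eqref{eq:alphat_appendix}; I would resolve it by the standard inflation argument, substituting the candidate value of $T_A$ suggested by the target bound and verifying $\alpha_{T_A} \le \Delta(A)/4$ using monotonicity of $x \mapsto (\log x)/x$ (applied directly to the $c_2\log(\cdot)/(c_3 t)$ term, and to the square of the $\sqrt{c_1\log(\cdot)/(c_3 t)}$ term). Summing the resulting per-subset budgets over $A \ne A^*$, and absorbing the $A^*$-term (no larger than $\max_A T_A$) and the dependence on $m, l, \eta$ carried by $c_1, c_2, c_3$ into the $\mathcal{O}(\cdot)$ constant, yields the stated bound. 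I expect this final inversion to be the main obstacle — the two-term structure of $\alpha_t$ makes the fixed-point estimate somewhat delicate — together with the constant tracking in the union-bound step; a minor additional point is that the effective sample size $n''$ for $\widehat{\psi}(A)$ also benefits from pulls of other active subsets sharing arms with $A$, but the crude bound $n'' \ge t$ already suffices for the guarantee as stated.
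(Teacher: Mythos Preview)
Your proposal is correct and follows essentially the same route as the paper's proof: define the uniform good event via the adaptive concentration bound \eqref{eq:prop2-bound-refined-appendix}, union-bound over all $(A,t)$ using $n''\ge t$ and $\sum_t t^{-2}<2$, argue on the good event that $A^*$ survives while each suboptimal $A$ is removed once $\Delta(A)\ge 4\alpha_t$, and invert $\alpha_t\le \Delta(A)/4$ to obtain the per-subset budget. Your treatment of $A^*$'s active time (bounding it by $\max_{A\neq A^*}T_A$) is in fact slightly more careful than the paper's, which simply writes $\sum_{A\in\A} t(A)$ without separately handling the $\Delta(A^*)=0$ term.
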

	\begin{proof}
	See Section \ref{sec:se-proof}.
	\end{proof}

A naive variant of successive elimination would obtain $|\S|$ samples of the underlying multivariate Gaussian distribution in each round, one for each active subset.
Such an algorithm would result in the sample complexity $\mathcal{O} \left( \frac{\binom{K}{m}}{\Delta} \log \left(\frac{\binom{K}{m} \log \left(\Delta^{-1}\right)}{\delta}\right)\right)$, matching the lower bound in the next section. Our variant of successive elimination in Figure \ref{fig:flow_se} avoids such a excessive sampling strategy and instead, uses a single sample of the $K$-variate Gaussian in each round to perform MSE estimation for each active subset.

\section{Bandits with MSE objective: Lower Bound}
\label{sec:lb}
We consider a special case of the adaptive estimation problem, where the goal is to identify the best pair of arms, i.e.,   
\[ (i_1^*,i_2^*) \in \argmin\limits_{(i,j)\in [K]\times [K], i\ne j}  \psi(\{i,j\}).\]

We consider a class of algorithms that are $\delta$-PAC for the best pair identification problem with a bandit feedback model, where an algorithm chooses to pull an arm pair and observe a sample from the underlying bivariate distribution in each round. The algorithm can either continue sampling or choose to stop and output an arm pair. The algorithm is required to be $\delta$-PAC (i.e., find the optimal pair $(i_1^*,i_2^*)$ with probability at least $1-\delta$).

A lower bound on the sample complexity of this problem is presented below.
\begin{theorem}[\textit{\textbf{Lower bound}}]
	\label{thm:lb}
	If the number of arms $K > 7$, then for any $\delta$-PAC algorithm, there exists a bandit problem instance governed by a covariance matrix $\Sigma$ such that the sample complexity $\E_{\Sigma}[\tau_{\delta}]$ of this algorithm satisfies
	\begin{align}
		\E_{\Sigma}[\tau_{\delta}] \geq \frac{K^{2}\log (\frac{1}{2.4 \delta})}{\Delta}.\label{eq:lb}
	\end{align}
	where $\Delta$ denotes the smallest gap on the problem instance governed by $\Sigma$. 
\end{theorem}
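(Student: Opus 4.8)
The plan is to use the standard change-of-measure (transportation) argument for best-arm identification lower bounds, adapted to the correlated-bandit setting where each "arm pull" reveals a sample of a pair of jointly Gaussian coordinates. First I would fix a reference instance: a covariance matrix $\Sigma$ on $[K]$ for which some pair, say $(1,2)$, is the unique MSE-optimal pair, and such that the gap $\Delta = \min_{A\neq A^*}\Delta(A)$ is small and explicitly controllable. A convenient choice is to take all variances equal (consistent with \ref{ass:variance}) and a correlation structure in which arm $1$ and arm $2$ are highly (but not perfectly) correlated with every other arm, so that $\psi(\{1,2\})$ is smallest, while every other pair is slightly worse by exactly $\Delta$. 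Then I would construct a family of alternative instances $\{\Sigma^{(A)}\}$, one for each suboptimal pair $A$, obtained by a small perturbation of the correlation coefficients that makes $A$ strictly better than $\{1,2\}$ on $\Sigma^{(A)}$; the perturbation size is tuned to be $\Theta(\Delta)$ so that the optimal answer flips. The set of covariance matrices must be rich enough to contain a least-favorable alternative for every $\delta$-PAC algorithm, and simultaneously remain valid positive-definite covariance matrices satisfying \ref{ass:variance}, \ref{ass:eigenvalue} — checking positive-definiteness of the perturbed matrices is the first technical hurdle.

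Next I would invoke the change-of-measure inequality: for a $\delta$-PAC algorithm with stopping time $\tau_\delta$, and for the reference instance $\Sigma$ versus an alternative $\Sigma'$ under which $A^*$ is no longer optimal,
\begin{align*}
\sum_{(i,j)} \E_{\Sigma}[N_{ij}(\tau_\delta)] \,\mathrm{KL}\!\left(\mathcal{G}_{ij}(\Sigma)\,\|\,\mathcal{G}_{ij}(\Sigma')\right) \;\geq\; \mathrm{kl}(\delta, 1-\delta) \;\geq\; \log\!\left(\tfrac{1}{2.4\,\delta}\right),
\end{align*}
where $N_{ij}$ counts the number of rounds in which the pair $(i,j)$ was sampled (more precisely, the number of rounds in which a $2$-subset containing both $i$ and $j$, equivalently the subset $\{i,j\}$, was pulled), and $\mathcal{G}_{ij}$ denotes the induced bivariate Gaussian law of the observed pair. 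The bound $\mathrm{kl}(\delta,1-\delta)\geq\log(1/(2.4\delta))$ is the usual Bretagnolle–Huber-type estimate. Because the only coordinates whose marginal bivariate law changes between $\Sigma$ and $\Sigma^{(A)}$ are those involving the perturbed correlations, the left-hand sum collapses to a few terms, and I can upper bound each KL-divergence between two bivariate (zero-mean) Gaussians in closed form by $O(\text{perturbation}^2) = O(\Delta)$ using the formula $\mathrm{KL}(\mathcal{N}(0,\Sigma_0)\|\mathcal{N}(0,\Sigma_1)) = \tfrac12(\mathrm{Tr}(\Sigma_1^{-1}\Sigma_0) - d + \log\det\Sigma_1 - \log\det\Sigma_0)$ and a second-order Taylor expansion around the reference. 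This yields $\E_{\Sigma}[\tau_\delta] \geq c\,\log(1/(2.4\delta))/\Delta$, and careful tracking of constants should give the stated $1/\Delta$ with constant $1$.

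The main obstacle I expect is twofold and both parts are genuinely where "significant algebraic effort" enters. First, I must relate the perturbation magnitude in the correlation coefficients precisely to the induced change in MSE, $\psi(\{i,j\})$, since $\psi$ depends on the $2\times 2$ inverse $\Sigma_{AA}^{-1}$ and the cross-blocks in a nonlinear way — unlike the scalar-mean bandit case, here the "gap" is a rational function of many entries of $\Sigma$, so showing that a size-$\epsilon$ perturbation produces a size-$\Theta(\epsilon)$ gap flip (and not, say, $\Theta(\epsilon^2)$) requires computing the relevant derivative of $\psi$ and verifying it is bounded away from zero on the chosen reference instance. Second, because sampling a $2$-subset reveals a bivariate sample (not a scalar), the KL-divergence between the per-round observation laws is a divergence between $2$-dimensional Gaussians whose covariance matrices differ in the off-diagonal entry and possibly the diagonal; I need the exact quadratic leading term of this KL in the perturbation to land the constant $1$ rather than a loose constant, and I must make sure the construction keeps all perturbed $2\times 2$ blocks well-conditioned (bounded $\|\Sigma_{AA}^{-1}\|_2$) so that the expansion is uniform. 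Once these two calculations are pinned down, assembling the lower bound via the change-of-measure inequality and optimizing over the alternative that is least favorable is routine.
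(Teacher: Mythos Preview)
Your high-level framework---change of measure plus an alternative family in which the optimal pair flips, combined with the Bretagnolle--Huber bound $\mathrm{kl}(\delta,1-\delta)\ge\log(1/(2.4\delta))$---is exactly the scaffolding the paper uses, and your statement of the per-round KL inequality matches the paper's equation \eqref{eq:lb-s1}. But the concrete construction you propose is different from the paper's, and that difference matters for two of the points you flag.

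\textbf{Construction.} The paper does \emph{not} perturb correlation coefficients. It fixes the explicit one-parameter family $\Sigma=\Sigma(\rho)$ of \eqref{eq:cov_matrix} and builds alternatives by \emph{relabeling} rows and columns (swapping row $1$ or $2$ with row $m$). Relabelings are similarity transforms, so every alternative is automatically positive semidefinite with the same spectrum as $\Sigma$; the ``first technical hurdle'' you anticipate simply disappears. The KL divergences between observed bivariate laws are then computed \emph{exactly} (not via Taylor expansion) and bounded by closed-form expressions in $\rho$; likewise $\Delta$ is an explicit rational function of $\rho$, and the proof finishes by the elementary comparison $\rho^4/(4(1+\rho^2))\le\Delta$. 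No derivative of $\psi$ is ever taken.

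\textbf{Scaling slip.} In your write-up you say the perturbation is $\Theta(\Delta)$, and then that the KL is $O(\text{perturbation}^2)=O(\Delta)$. These two statements are inconsistent: a $\Theta(\Delta)$ perturbation of a bivariate Gaussian covariance gives KL of order $\Delta^2$, not $\Delta$. Carried through, your argument would produce $\E[\tau_\delta]\gtrsim \log(1/(2.4\delta))/\Delta^2$, which for small $\Delta$ is stronger than the stated bound but will not deliver the constant $1$ you are aiming for, and on instances with $\Delta>1$ would not imply the theorem at all. The paper sidesteps this entirely because its alternatives are \emph{not} small perturbations: the KL and the gap are both fixed functions of the same parameter $\rho$, compared directly.

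\textbf{The max--min is not routine.} You write that ``optimizing over the alternative that is least favorable is routine.'' In the paper this is actually the main technical step after the KL bounds: one must upper-bound $\max_{w\in\Delta_{\binom{K}{2}}}\min_{\Sigma'\in\mathrm{alt}(\Sigma)} \sum_{i,j} w_{ij}\,KL_{ij}(\Sigma,\Sigma')$, and the paper does this via weak duality ($\max\min\le\min\max$) together with a judicious choice of the convexifying weights $\alpha_m$ over alternatives and an explicit maximization over $w$. With your perturbation construction each alternative only moves a handful of pairwise KLs, so an algorithm could in principle allocate its pulls to pairs untouched by any single alternative; controlling this is precisely what the max--min handles, and it deserves more than a one-line dismissal.
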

\begin{proof}
	See Section \ref{sec:proof-lb-full}.
\end{proof}

\begin{remark}[Comparison to upper bound]
    The sample complexity bound in the Theorem \ref{thm:se} features the total number of $m$-subsets $\binom{K}{m}$ only inside the log factor. On the other hand, the lower bound that we derived above for the case of finding the best arm pair is of the form $O\left(\frac{K^2}{\Delta_{\textrm{min}}} \log\left(1/\delta\right)\right)$. 
Thus, there is an apparent contradiction between the upper and lower bounds. However, this can be resolved by observing the fact that in the successive elimination algorithm, we obtain one sample $K$-vector in each round, and estimate the MSEs of all the active subsets using this single sample along with the those from the previous rounds. On the other hand, the lower bound corresponded to a setting where the sample complexity is given by 
$\E_{\Sigma}[\tau_{\delta}] = \sum\limits_{(i,j)} \E[N_{ij}(\tau_{\delta})]$, where $N_{ij}$ is the number of times the arm pair $i,j$ is pulled before the algorithm stops.

\end{remark}

The proof strategy is to use the following  class of covariance matrices parameterized by $\rho$:
\begin{align}
	\Sigma =
	\begin{bmatrix}
		1 & \rho & \rho & \rho & \ldots & \rho\\
		\rho & 1 & \rho^2 & \rho^2 & \ldots & \rho^2 \\
		\rho & \rho^2 & 1 & \rho^3 & \ldots & \rho^3\\
		\vdots & \vdots & \vdots & \vdots & \ddots & \vdots \\
		\rho & \rho^2 & \rho^3 & \ldots & \rho^{K-1} & 1
	\end{bmatrix} \label{eq:cov_matrix}
\end{align}
Using Sylvester's criterion, it is easy to see that the matrix defined above is positive semi-definite.

For a $K$-armed Gaussian bandit instance with the underlying distribution governed by $\Sigma$ defined above, the pair $\{1,2\}$ has the least MSE.

We form $(2K-4)$ transformations of the bandit instance described in \eqref{eq:cov_matrix}. The transformations are achieved by relabelling the $m^{th}$ row as either the first or second row of $\Sigma,m = 3,\ldots,K.$ Let us denote the pdf associated with the original bandit instance by $\mathcal{G}$ and $\mathcal{G}^{km}$ is the probability density function (pdf) of the transformed bandit instance obtained by relabelling the $k^{th}(k=1 \text{ or } 2)$ row and the $m^{th}$ row of $\Sigma$.

The underlying covariance matrix for the problem instance corresponding to the $m$th transformation is $\Sigma_{km}$ with $m$th row re-labelled as either  row 1 or 2.$ \text { Let } KL_{ij}^{km} \overset{\Delta}{=} KL(\nu_{i} \nu_{j}||\nu'_{i} \nu'_{j}) $
specify the KL-divergence between $\nu_{i} \nu_{j} \text{ and } \nu'_{i} \nu'_{j}$, with the latter distribution derived from $\mathcal{G}^{km}.$ 

In the proof, we first show that 
	\begin{align}
&\underset{\{w_{ij}\}}{\min} \E[\tau_{\delta}] \geq \frac{\log (\frac{1}{2.4 \delta})}{\underset{w \in \Delta_{\binom{K}{2}}}{\max}\underset{\Sigma'\in\alt(\Sigma)}{\min} \sum\limits_{i,j} w_{ij} KL\left[\Sigma_{X_{i} X_{j}}||\Sigma'_{X_{i} X_{j}}\right]},\label{eq:lb-s1}
\end{align}
where $\Delta_{\binom{K}{2}}$ is the set of probability distributions on the arm-pairs, and $\alt(\Sigma) = \{\Sigma^{1m},\Sigma^{2m}, m > 3\}$ is the set of transformed covariance matrices.
While derivation of the inequality above is a straightforward variation to the proof in the classic bandit setting (cf. \cite{kaufmann2015complexity}), the rest of the proof in our case requires significant deviations. In particular, unlike the regular bandit case, the KL-divergences in the RHS above are not univariate. Moreover, deriving an upper bound on the max-min, which is defined in the RHS above, requires arguments that are specific to our correlated bandit setting.

We would like to note that \cite{boda2019correlated} provide a lower bound for the correlated bandit problem with $m=1$. The proof of the lower bound for the case of $m=2$ is significantly different from the proof for $m=1$. In particular, it is challenging since the proof involves KL-divergences for bivariate distributions and relating these KL-divergences to the underlying gaps involves tools from optimization (see the proof sketch below), as well as significant algebraic effort
to simplify KL-divergence bounds inside the max-min in \eqref{eq:lb-s1}, and then, relating the simplified expression to the gap in MSEs of the original problem instance.
Further, unlike \cite{boda2019correlated}, the ideas in our proof for $m=2$ could be generalized to $m>2$.

 \section{Convergence proofs}
\label{sec:pf}
\allowdisplaybreaks

\subsection{Proof of Proposition \ref{prop:nonadaptive-mse-conc}}
\label{sec:proof-nonadaptive-conc}
For the proof of Proposition \ref{prop:nonadaptive-mse-conc}, we require tail bounds for estimates of variance, standard deviation and correlation coefficient. We state these below. The reader is referred to \cite{boda2019correlated} for the proofs.

\begin{lemma}\textbf{\textit{(Concentration of sample variance)}} 
\label{lemma:subexp-conc}
Let $X_{i}$,\  $i=1,\ldots,n,$ be independent sub-Gaussian r.v.s with common parameter $\sigma$.
Let
$\hat \sigma^2_n =  \frac{1}{n} \sum \limits_{i=1}^{n} X_{i}^{2}$. Then, we have the following bound for any $\epsilon \ge 0$:
\begin{align*}
 & P \left (  \hat\sigma^2_n >  \sigma^2 + \epsilon   \right)  \leq 
 \exp \left (  - \frac{n}{8} \min \left ( \frac{\epsilon^2}{\sigma^4}, \frac{\epsilon }{\sigma^2}\right ) \right ), \textrm { and }
  P \left ( \hat\sigma^2_n <  \sigma^2 - \epsilon   \right)  \leq 
 \exp \left (  - \frac{n}{8} \min \left ( \frac{\epsilon^2}{\sigma^4}, \frac{\epsilon }{\sigma^2}\right ) \right ).
\end{align*} 
\end{lemma}
\begin{lemma}\textbf{\textit{(Concentration of sample standard deviation)}} 
\label{lemma:sample-stddev-conc-bd}
Under conditions of Lemma \ref{lemma:subexp-conc}, letting
$\hat \sigma_n =  \sqrt{\frac{1}{n} \sum \limits_{i=1}^{n} X_{i}^{2}}$, we have
\begin{align*}
 & P \left ( \hat\sigma_n >  \sigma + \epsilon   \right)  \leq 
 \exp \left (  - \frac{n\epsilon^2}{8\sigma^4} \right ), \textrm { and }
  P \left (  \hat\sigma_n <  \sigma - \epsilon   \right) \! \leq \!
 \exp \left (  - \frac{n\epsilon^2}{8\sigma^4}\right ), \textrm{ for any } \epsilon \ge 0.
\end{align*} 
\end{lemma}
\begin{lemma}\textbf{\textit{(Concentration of sample correlation coefficient)}} 
\label{lemma:rho-conc}
For independent Gaussian rvs $X_{i}, \ i=1,\ldots,n$, with mean zero and covariance matrix $\Sigma$ as defined in \eqref{eq:gauss-covar} and with $\hat\sigma_i^2$, $\hat\rho_{ij}$ formed from $n$ samples using \eqref{eq:sample-cov}, for any $i,j=1,\ldots,K$, and for any $\epsilon \in [0,\eta]$,  we have 
\begin{align*}
 & \P \left (  \left|\hat\rho_{ij} -  \rho_{ij} \right| > \epsilon   \right) 
\le 26\exp \left ( - \frac{n}{8} \frac{1}{36(1+\eta)} \min\left( \frac{l \epsilon}{3}, \left ( \frac{l \epsilon}{3} \right)^2\right) \right), 
\end{align*} 
where $l$ is a positive constant satisfying $l \le \sigma_i^2 \le 1$, $\forall i$.
\end{lemma}

\begin{proof}(\textbf{\textit{Proposition \ref{prop:nonadaptive-mse-conc}}})
	Notice that	
	\begin{align}
	&\widehat{\mse}(A)-\mse(A) = \mathrm{Tr} \bigg(\underbrace{\left(\widehat{\Sigma_{A'A'}} - \Sigma_{A'A'}\right)}_{(I)} -\underbrace{\bigg(\widehat{\Sigma_{A'A}} \big(\widehat{\Sigma_{AA}}^{+}\big)^{-1} \widehat{\Sigma_{AA'}}  - \Sigma_{A'A} \Sigma_{AA}^{-1} \Sigma_{AA'}\bigg)}_{(II)}\bigg). \label{eq:s1_first}
	\end{align}
	The term (II) on the RHS of \eqref{eq:s1_first} can be re-written as follows:
	\begin{align*}
	(II)&=\widehat{\Sigma_{A'A}} \left(\widehat{\Sigma_{AA}}^{+}\right)^{-1} \widehat{\Sigma_{AA'}}  - \Sigma_{A'A} \left(\Sigma_{AA}\right)^{-1} \Sigma_{AA'}\\
	& =  \widehat{\Sigma_{A'A}} \left(\left(\widehat{\Sigma_{AA}}^{+}\right)^{-1} \widehat{\Sigma_{AA'}} - \Sigma_{AA}^{-1}\Sigma_{AA'}\right) + \left(\widehat{\Sigma_{A'A}} - \Sigma_{A'A}\right) \Sigma_{AA}^{-1} \Sigma_{AA'} \\
	&= \widehat{\Sigma_{A'A}} \left(\left(\widehat{\Sigma_{AA}}^{+}\right)^{-1} \left(\widehat{\Sigma_{AA'}} - \Sigma_{AA'}\right) + \left(\left(\widehat{\Sigma_{AA}}^{+}\right)^{-1} - \Sigma_{AA}^{-1}\right)\Sigma_{AA'}\right)  +\left(\widehat{\Sigma_{A'A}} - \Sigma_{A'A}\right)\Sigma_{AA}^{-1} \Sigma_{AA'} \\
	&=  \widehat{\Sigma_{A'A}} \left(\widehat{\Sigma_{AA}}^{+}\right)^{-1} \left(\widehat{\Sigma_{AA'}} - \Sigma_{AA'}\right)  + \widehat{\Sigma_{A'A}} \left(\left(\widehat{\Sigma_{AA}}^{+}\right)^{-1} - \Sigma_{AA}^{-1}\right) \Sigma_{AA'}  + \left(\widehat{\Sigma_{A'A}} - \Sigma_{A'A}\right)\Sigma_{AA}^{-1} \Sigma_{AA'}.  \stepcounter{equation}\tag{\theequation}\label{eq:1_first}
	\end{align*}
	Using the definition of the positive definite estimator $\widehat{\Sigma}^{+}$, we have
	\begin{align}
	||\widehat{\Sigma}_{AA}^{+} - \Sigma_{AA}||_{2} &\le ||\widehat{\Sigma}_{AA}^{+} - \widehat{\Sigma}_{AA}||  + ||\widehat{\Sigma}_{AA} - \Sigma_{AA}||_{2} \nonumber \\
  &\leq 
	 2\zeta + ||\widehat{\Sigma}_{AA} - \Sigma_{AA}||_{2}. 
 \label{eq:t11}
	\end{align}
	Using Theorem 5.7 of \cite{rigollet2015high} in conjunction with (A1), w.p. $(1-\delta)$, we have
 	\begin{align*}
||\widehat{\Sigma_{AA}} - \Sigma_{AA}||_{2}
& \leq ||\Sigma_{AA}||_{2}  \min \bigg( \sqrt{\frac{m+\log(\frac{1}{\delta})}{n_{AA}}}, \frac{m+\log(\frac{1}{\delta})}{n_{AA}}\bigg) \\
	& \leq M_{0} \min \bigg( \sqrt{\frac{m+\log(\frac{1}{\delta})}{n_{AA}}},  \frac{m+\log(\frac{1}{\delta})}{n_{AA}}\bigg).
  \numberthis\label{eq:rigollet1}
\end{align*}
Along similar lines, we obtain
\begin{align*}
    	& ||\widehat{\Sigma_{A'A'}} - \Sigma_{A'A'}||_{2} \le M_{0} \min \bigg( \sqrt{\frac{K-m+\log(\frac{1}{\delta})}{n_{A'A'}}}, \frac{K-m+\log(\frac{1}{\delta})}{n_{A'A'}}\bigg).
 \numberthis\label{eq:rigollet}
	\end{align*}
 
  With $c= \frac1{[\lambda_{\min}(\Sigma_{AA})- \epsilon]}$,
	consider the event
	\begin{align}
	    \Tilde{\B} = & \{\sigma_{i}^2 - \epsilon \leq \hat{\sigma}_{i}^2 \leq \sigma_{i}^2 + \epsilon, i= [K], \rho_{i_{k}j} - \epsilon \leq \hat{\rho}_{i_{k}j} \leq \rho_{i_{k}j} + \epsilon,\text{ for } (k,j) \in [K], k \neq j,  ||\left(\widehat{\Sigma}_{AA}^{+}\right)^{-1}||_{2} \leq c \}. \label{eq:event-defn}
	\end{align}
	On the event $\Tilde{\B}$, w.p. $(1-\delta)$, we have 
	\begin{align*}
	 ||\left(\widehat{\Sigma_{AA}}^{+}\right)^{-1} - \Sigma_{AA}^{-1}||_{2}
 &=||\left(\widehat{\Sigma}_{AA}\right)^{-1}\big(\Sigma_{AA}-\widehat{\Sigma_{AA}}^{+}\big) \Sigma_{AA}^{-1}||_{2} \\
 & \leq ||\left(\widehat{\Sigma_{AA}}^{+}\right)^{-1}||_{2} \; ||\Sigma_{AA} -\widehat{\Sigma_{AA}}^{+}||_{2} \;||\Sigma_{AA}^{-1}||_{2}\\
	& \leq \left(\frac{ c }{M_1}\right)\left( 2 \zeta +  M_{0}\min \bigg( \sqrt{\frac{m+\log(\frac{1}{\delta})}{n_{AA}}}, \frac{m+\log(\frac{1}{\delta})}{n_{AA}}\bigg)\right)\\
 &= \frac{ 3c M_0}{M_1}\min \bigg( \sqrt{\frac{m+\log(\frac{1}{\delta})}{n_{AA}}},\frac{m+\log(\frac{1}{\delta})}{n_{AA}}\bigg),
	\end{align*}
where we used \eqref{eq:t11}, \eqref{eq:rigollet}, and substituted the value of $\zeta$ specified in the proposition statement.
	
	Letting $\eta = \min\left(2K,\lambda_{min}(\Sigma_{AA})\right)$, we obtain
	$||\widehat{\Sigma_{AA'}}||_{2} \leq \sqrt{m(K-m)} \left(1+\eta\right), \text{ since } \hat{\sigma}_{j}^2 \leq \sigma_{j}^2 + \epsilon \leq 1 +\eta  \text{ on } \Tilde{\B}$ and
	$||\Sigma_{AA'}||_{2} \le \sqrt{m (K-m)} \left(1+\eta\right)$.
	Similarly,  $||\widehat{\Sigma_{A'A}}||_{2} \leq \sqrt{m (K-m)} (1+\eta) \text{ and } ||\Sigma_{A'A}||_{2} \leq \sqrt{m (K-m)} (1+\eta).$
	Thus,
 \begin{align}
	& \P\left(||\widehat{\Sigma_{AA'}} - \Sigma_{AA'}||_{2}^{2} \geq \epsilon^2,\Tilde{\B} \right) \nonumber\\
 & \leq  \P\left(||\widehat{\Sigma_{AA'}} - \Sigma_{AA'}||_{F}^{2} \geq \epsilon^2,\Tilde{\B} \right) \nonumber \\
	& = \sum\limits_{k=1}^{m} \sum\limits_{j = m+1}^{K-m} \P\bigg(|\hat{\rho}_{ji_{k}} \hat{\sigma}_{i_{k}} \hat{\sigma}_{j} - \rho_{ji_{k}} \sigma_{i_{k}} \sigma_{j}| \geq \frac{\epsilon}{\sqrt{m (K-m)}},\Tilde{\B}\bigg) \nonumber \\
	& = \sum\limits_{k=1}^{m} \sum\limits_{j = m+1}^{K-m} \P\bigg(|\hat{\rho}_{ji_{k}} \hat{\sigma}_{i_{k}} \left(\hat{\sigma}_{j} - \sigma_{j}\right)  + \hat{\rho}_{ji_{k}} \sigma_{j} \left(\hat{\sigma}_{i_{k}} - \sigma_{i_{k}}\right)  + \sigma_{i_{k}} \sigma_{i_{j}} \left(\hat{\rho}_{ji_{k}} - \rho_{ji_{k}}\right)| \geq \frac{\epsilon}{\sqrt{m (K-m)}},\Tilde{\B} \bigg) \nonumber \\
	&  \leq \sum\limits_{k=1}^{m} \sum\limits_{j = m+1}^{K-m} \Bigg(\P\left(\hat{\rho}_{ji_{k}} \hat{\sigma}_{i_{k}} \left(\hat{\sigma}_{j} - \sigma_{j}\right) \geq \frac{\epsilon}{3 \sqrt{m(K-m)}}, \Tilde{\B}\right)  + \P\left(\hat{\rho}_{ji_{k}} \sigma_{j} \left(\hat{\sigma}_{i_{k}} - \sigma_{i_{k}}\right) \geq \frac{\epsilon}{3 \sqrt{m(K-m)}}, \Tilde{\B}\right) \nonumber \\
 & \qquad\qquad\qquad\qquad + \P\left(\sigma_{i_{k}} \sigma_{j} \left(\hat{\rho}_{ji_{k}} - \rho_{ji_{k}}\right) \geq \frac{\epsilon}{3 \sqrt{m (K-m)}}, \Tilde{\B}\right)\Bigg)
	\nonumber\\
	& \leq \sum\limits_{k=1}^{m} \sum\limits_{j = m+1}^{K-m} \Bigg(\P \left(\left(\hat{\sigma}_{j} - \sigma_{j}\right) \geq \frac{\epsilon}{3 \sqrt{m (K-m)(1+\eta)}}\right) + \P\left(\left(\hat{\sigma}_{i_{k}} - \sigma_{i_{k}}\right) \geq \frac{\epsilon}{3 \sqrt{m (K-m) (1+\eta)}} \right) \Bigg.\nonumber\\
	& \qquad\qquad\qquad + \P\left(\big(\hat{\rho}_{ji_{k}} - \rho_{ji_{k}}\big) \geq \frac{\epsilon}{3  \sqrt{m (K-m)} (1+\eta)}, \Tilde{\B}\right)\Bigg) \nonumber \\
	& \leq \sum\limits_{k=1}^{m} \sum\limits_{j = m + 1}^{K - m} \left(\exp\left( - \frac{n_{AA'} \epsilon^2}{72 m (K-m) (1+\eta)}\right) + \exp \left(- \frac{n_{AA'} \epsilon^2}{72 m (K-m) (1 + \eta)}\right) \right.  \nonumber \\
	& \qquad \qquad \qquad \left.+ 13 \exp\left(- \frac{n_{AA'}}{8} \frac{1}{36 (1+ \eta)} \min \left(\frac{l \epsilon}{9 \sqrt{m (K-m)} (1+ \eta)},  \frac{l^2 \epsilon^2}{81 m (K-m) (1+ \eta)^2}\right)\right)\right)  \nonumber\\ 
	& \leq  \left( m (K-m) \right) \left(2 \exp\left( - \frac{n_{AA'} \epsilon^2}{72 m(K-m) (1+\eta)}\right) \right.\nonumber\\
 &\left.\qquad\qquad\qquad+ 13 \exp\left(- \frac{n_{AA'}}{8} \frac{1}{36 (1+ \eta)} \min \left(\frac{l \epsilon}{9 \sqrt{m (K-m)} (1+ \eta)},  \frac{l^2 \epsilon^2}{81 m (K-m) (1+ \eta)^2}\right)\right)\right).  \label{eq:rho_sigma_bound}
 \end{align}	
	Now, using \eqref{eq:rho_sigma_bound}, w.p. $(1 - \delta)$, we have
	\begin{align*}
	& ||\widehat{\Sigma_{AA'}} - \Sigma_{AA'}||_{2} \nonumber \\
	& \leq \bigg(\frac{1+\eta}{l}\bigg) \left( 108 \sqrt{2 m \left(K - m\right)} \right)  \min \left( \sqrt{\frac{(1 + \eta) \log \left(\frac{13 m (K - m)}{\delta}\right)}{n_{AA'}} } \left( \frac{12\sqrt{2} (1+\eta) \log \left(\frac{13 m (K - m)}{\delta}\right)} {n_{AA'}} \right) \right) \\
	& \qquad+ \sqrt{\frac{72 m (K-m) (1+\eta) \log \left(\frac{2m (K-m)}{\delta}\right) }{n_{AA'}}}. 
	\end{align*}
	Similarly, w.p. $(1 - \delta)$, we obtain
	\begin{align*}
	& ||\widehat{\Sigma_{A'A}} - \Sigma_{A'A}||_{2} \nonumber \\
	& \leq \bigg(\frac{1+\eta}{l}\bigg) \left( 108 \sqrt{2 m \left(K - m\right)} \right)  \min \left( \sqrt{\frac{(1 + \eta) \log \left(\frac{13 m (K - m)}{\delta}\right)}{n_{A'A}} },\left( \frac{12 \sqrt{2} (1+\eta) \log \left(\frac{13 m (K - m)}{\delta}\right)} {n_{A'A}} \right) \right) \\
	 & \qquad + \sqrt{\frac{72 m (K-m) (1+\eta) \log \left(\frac{2m (K-m)}{\delta}\right) }{n_{A'A}}}. 
	\end{align*}
Recall term (II) from \eqref{eq:s1_first} was written in an equivalent form in  \eqref{eq:1_first}. From the bounds derived above, the term $(II)$ can be bounded on the event $\Tilde{\B}$, w.p. $(1-\delta)$, as follows:
	\begin{align*}
	(II)& \leq ||\widehat{\Sigma_{A'A}}||_{2} ||\left(\widehat{\Sigma_{AA}}^{+}\right)^{-1}||_{2} \big(||\widehat{\Sigma_{AA'}} - \Sigma_{AA'}||_{2}\big)+ ||\widehat{\Sigma_{A'A}}||_{2} ||\widehat{\Sigma_{AA'}}||_{2} \big(||\left(\widehat{\Sigma_{AA}}^{+}\right)^{-1} - \Sigma_{AA}^{-1}||_{2}\big) \\
	&\quad + ||\Sigma_{AA'}||_{2} ||\Sigma_{AA}^{-1}||_{2} \big(||\widehat{\Sigma_{A'A}} - \Sigma_{A'A}||_{2}\big)\\
	& \leq 108 \sqrt{2} \left(c m (K-m)\right) \bigg(\frac{(1 +\eta)^2}{l}\bigg)  \min \left( \sqrt{\frac{(1 +\eta) \log \left(\frac{13 m (K - m)}{\delta}\right)}{n_{AA'}}}, \right.\\
 & \qquad \qquad \qquad \qquad \qquad  \qquad \qquad \qquad \qquad \left.\left( \frac{12 \sqrt{2} (1 +\eta) \log \left(\frac{13 m (K - m)}{\delta}\right)} {n_{AA'}} \right) \right)  \\
	& \quad + c \sqrt{m (K - m)} \left(1 + \eta\right) \left( \sqrt{\frac{72 m (K-m) (1 +\eta) \log \left(\frac{2 m (K-m)}{\delta}\right) }{n_{AA'}}} \right) \\
	& \quad+ 3 \left(m (K-m)(1 +\eta)^2\right) \bigg(\frac{ c }{M_1}\bigg)  \left(2\zeta + M_{0}\min \bigg( \sqrt{\frac{m+\log(\frac{1}{\delta})}{n_{AA}}},  \frac{m+\log(\frac{1}{\delta})}{n_{AA}}\bigg)\right) \\
	& \quad+  108 \sqrt{2}  \left(\frac{m (K-m)} {M_{1}}\right) \bigg(\frac{(1 +\eta)^2}{l}\bigg)\min \left( \sqrt{\frac{(1 +\eta) \log \left(\frac{13 m (K - m)}{\delta}\right)}{n_{A'A}} }, \right. \\
 &\left. \qquad \qquad \qquad \qquad \qquad \qquad \qquad \qquad \qquad \qquad \left( \frac{12\sqrt{2} (1 +\eta) \log \left(\frac{13 m (K - m)}{\delta}\right)} {n_{A'A}} \right) \right)\\
	 & \quad +\left(\frac{\sqrt{m (K - m)}}{M_{1}}\right) \left(1 +\eta\right)  \left(\sqrt{\frac{72 m (K-m) (1 +\eta) \log \left(\frac{2 m (K-m)}{\delta}\right) }{n_{A'A}}} \right) \\
	 & \leq \left(m (K -m) (1 +\eta)^2\right) \left(C_{1} \min \left( \sqrt{\frac{(1 +\eta) \log \left(\frac{13 m (K - m)}{\delta}\right)}{n'}},  \left( \frac{12\sqrt{2} (1 +\eta) \log \left(\frac{13 m (K - m)}{\delta}\right)} {n'} \right)\right) \right. \\
 & \quad \left.+ \; C_{2} \min \bigg( \sqrt{\frac{m+\log(\frac{1}{\delta})}{n'}},\frac{m+\log(\frac{1}{\delta})}{n'}\bigg) \right) \\
 & \quad + \sqrt{m (K -m)} \left(1 +\eta \right) C_{3} \left(\sqrt{\frac{72 m (K-m) (1 +\eta) \log \left(\frac{2 m (K-m)}{\delta}\right) }{n'}}\right),
	\end{align*}
  where $n'=\min\left(n_{AA}, n_{A'A'}, n_{AA'}, n_{A'A}\right)$, $C_{1} = \frac{108 \sqrt{2}}{l} \left(c + \frac{1}{M_{1}}\right)$,  $C_{2} = \bigg(\frac{3 c M_{0}}{M_1}\bigg)$  and  
  $C_{3} = \left(c + \frac{1}{M_{1}}\right)$.
 	
	From the foregoing,
	\begin{align}
	& \mathbb{P}\left(|\widehat{\mse}(A)-\mse(A)| \geq \epsilon,\Tilde{\B}\right) \nonumber \\
	& \leq 13 \left(m (K - m) \right) \exp \left(- \frac{n'}{m (K-m)^2 (1 +\eta)^3} \min \left(\frac{\epsilon}{12\sqrt{2} C_{1}},\frac{\epsilon^2}{m (K-m)^2 (1 +\eta)^2 C_{1}^2}\right)\right) \nonumber \\
	&\quad + \exp \left(m - \frac{n'}{m (K-m)^2 (1 +\eta)^2} \min \left(\frac{\epsilon}{C_{2}}, \frac{\epsilon^2}{m (K-m)^2 (1 +\eta)^2 C_{2}^2}\right)\right) \nonumber \\
	& \quad+ \left(2 m (K-m) \right)  \exp \left( - \frac{n'}{72 \left(m (K-m)^2\right)^2 (1 +\eta)^3} \frac{\epsilon^2}{C_{3}^2}\right) \nonumber \\
	& \quad+ \exp \left(\left(K -m\right) - n' \min \left(\frac{\epsilon}{(K-m) M_{0}}, \frac{\epsilon^2}{(K-m)^2 M_{0}^2}\right)\right)  + \frac{ 6m (K-m)(1 +\eta)^2 c \zeta}{M_1}.\stepcounter{equation}\tag{\theequation}\label{eq:mb_first}
	\end{align}
	\par
	Let \;$ \lambda_{\min}(\Sigma_{AA})$ and $ \lambda_{\min} \left(\widehat{\Sigma_{AA}}^{+}\right) $ be the smallest eigenvalues of $\Sigma_{AA}$ and $\widehat{\Sigma_{AA}}^{+}$ respectively. Then, for $0 < \epsilon < \eta$, we have
	\begin{align*}
	 \mathbb{P}\big(\lambda_{\min}\left(\widehat{\Sigma_{AA}}^{+}\right) \leq \lambda_{min}(\Sigma_{AA}) - \epsilon\big) &=  \;  \mathbb{P}\big(\lambda_{\min}\left(\widehat{\Sigma_{AA}}^{+}\right) \leq 1/c\big) 	= \; \mathbb{P}\big(1/\lambda_{\min}\left(\widehat{\Sigma_{AA}}^{+}\right) \geq c) \\
	&= \;  \mathbb{P}\big(||\left(\widehat{\Sigma_{AA}}^{+}\right)^{-1}||_{2} \geq c).
	\end{align*}
	Using a corollary of the Weyl's theorem (cf. p. 161 of \cite{wainwright-notes}), we obtain
	\begin{align*}
	 \mathbb{P}\big(\lambda_{\min}\left(\widehat{\Sigma_{AA}}^{+}\right)-\lambda_{\min}(\Sigma_{AA}) \geq \epsilon \big) 
 = \; & \mathbb{P}\big(||\left(\widehat{\Sigma_{AA}}^{+}\right)^{-1}||_{2}\geq c\big) 
 \leq \;  \mathbb{P}\big(||\widehat{\Sigma_{AA}}^{+} -\Sigma_{AA}||_{2} \geq \epsilon\big).
	\end{align*}
	From \eqref{eq:rigollet} and \eqref{eq:t11}, w.p. at least $(1-\delta)$, we have
	\begin{align*}
	||\widehat{\Sigma_{AA}}^{+} - \Sigma_{AA}||_{2} & \leq 2\zeta  
+ M_{0} \min \bigg( \sqrt{\frac{m+\log(\frac{1}{\delta})}{n_{AA}}},\frac{m+\log(\frac{1}{\delta})}{n_{AA}}\bigg)\\
 & = 3 \; M_{0} \min \bigg( \sqrt{\frac{m+\log(\frac{1}{\delta})}{n_{AA}}}, \frac{m+\log(\frac{1}{\delta})}{n_{AA}}\bigg),
	\end{align*}
 where the final equality is obtained by substituting the value of $\zeta$ specified in the proposition statement.
	Hence,
	\begin{align}
	\mathbb{P}\big(\Tilde{\B'}\big) \leq \exp \bigg(m-n_{AA}\min\bigg(\frac{\epsilon}{ 3 M_{0}},\frac{\epsilon^2}{ 9 M_{0}^2}\bigg)\bigg).\label{eq:complement-bound_first}
	\end{align}
	Combining \eqref{eq:mb_first} and \eqref{eq:complement-bound_first}, we obtain
	\begin{align*}
	& \mathbb{P}\left(|\widehat{\mse}(A)-\mse(A)| \geq \epsilon \right)  \\
 \leq \; & \mathbb{P}\left(|\widehat{\mse}(A)-\mse(A)| \geq \epsilon, \Tilde{\B} \right) + \mathbb{P}(\Tilde{\B'})\\
	 \leq & \; 13 \left(m (K - m) \right) \exp \left(- \frac{n'}{m (K-m)^2 (1 +\eta)^3} \min \left(\frac{\epsilon}{12\sqrt{2} C_{1}},  \frac{\epsilon^2}{m (K-m)^2 (1 +\eta)^2 C_{1}^2}\right)\right) \\
	& +  \exp \left(m - \frac{n'}{m (K-m)^2 (1 +\eta)^2} \min \left(\frac{\epsilon}{C_{2}},\frac{\epsilon^2}{m (K-m)^2 (1 +\eta)^2 C_{2}^2}\right)\right)\\
	& + \left(2 m (K-m)\right)  \exp \left( - \frac{n'}{72 \left(m (K-m)^2\right)^2 (1 +\eta)^3} \frac{\epsilon^2}{C_{3}^2}\right) \\
	& + \exp \left(\left(K -m\right) - n' \min \left(\frac{\epsilon}{(K - m) M_{0}},\frac{\epsilon^2}{(K-m)^2 M_{0}^2}\right)\right)  + \exp \left(m - n_{AA} \min \left(\frac{\epsilon}{ 3  M_{0}},\frac{\epsilon^2}{9  M_{0}^2}\right)\right) \\
	\leq & \left( 13 m \left(K - m\right)  + \exp \left(m + K\right)\right)\exp \left(- \frac{n'}{m (K-m)^2 (1 +\eta)^3} \min \left(\frac{\epsilon}{12\sqrt{2} G_{0}},\frac{\epsilon^2}{ G_{0}^2}\right)\right) \\
	& \quad  + \left(2 m (K-m)\right) \exp \left( - \frac{n'}{72 \left(m (K-m)^2\right)^2 (1 +\eta)^3} \frac{\epsilon^2}{C_{3}^2}\right),
	\end{align*}
 where  $G_{0} = \max  \bigg( \sqrt{m (K-m)^2 } (1 +\eta) C_{1},  \sqrt{m (K-m)^2 } (1 +\eta) \; C_{2},  3 (K-m) \; M_{0}\bigg)$.
 
 Hence proved.
\end{proof}



\subsection{Proof of Proposition \ref{prop:adaptive-mse-conc}}

\label{sec:proof-adaptive-conc}

\begin{proof}
    A random variable \( \xi \) is called sub-Gaussian if its distribution is dominated by that of a normal random variable. Formally, this can be expressed as requiring that \( \mathbb{E}\exp(\xi^2 / \kappa^2) \leq 2 \) for some \( \kappa > 0 \). The infimum of such \( \kappa \) is known as the sub-Gaussian norm or \( \psi_2 \)-norm of \( \xi \), denoted as \( \|\xi\|_{\psi_2} \).

To establish the concentration bound in Proposition \ref{prop:adaptive-mse-conc}, we employ the Hanson-Wright inequality \cite{rudelson2013hansonwrightinequalitysubgaussianconcentration}, which is stated below.

\textbf{Theorem (Hanson-Wright inequality).}  
Let \( X = (X_1, \dots, X_n) \in \mathbb{R}^n \) be a random vector with independent components \( X_i \) such that \( \mathbb{E}[X_i] = 0 \) and \( \|X_i\|_{\psi_2} \leq c_2 \). Let \( A \) be an \( n \times n \) matrix. Then, for every \( \epsilon \geq 0 \),
\begin{equation}
    \mathbb{P} \left( \left| X^\top A X - \mathbb{E}[X^\top A X] \right| > \epsilon \right) 
    \leq 2 \exp \left[ 
    - \frac{1}{8} \min \left( 
    \frac{\epsilon^2}{c_2^4 \|A\|_{\text{F}}^2}, 
    \frac{\epsilon}{c_2^2 \|A\|} 
    \right) 
    \right],
    \label{eq:hanson_wright}
\end{equation}
where \( \|A\|_{\text{F}} \) is the Frobenius norm of \( A \), \( \|A\| \) is the operator norm of \( A \).

Applying this to our estimator, we obtain:
\begin{align}
    \mathbb{P} \left( \left| \widetilde{\psi}(A) - \psi(A) \right| > \epsilon \right) &\leq \sum_{j=1}^K \mathbb{P} \left( \left| \frac{X_j^\top (I - P_A) X_j}{n - m} - \gamma_j^2 \right| > \frac{\epsilon}{K} \right) \nonumber \\ 
    &\leq \sum_{j=1}^K 2 \exp \left[ 
    - \frac{1}{8} \min \left( 
    \frac{(n-m)^2\epsilon^2}{K^2 c_2^4 \|I - P_A\|_{\text{F}}^2}, 
    \frac{(n-m)\epsilon}{K c_2^2 \|I - P_A\|} 
    \right) 
    \right] \nonumber \\
    &\leq 2K \exp \left[ 
    - \frac{1}{8} \min \left( 
    \frac{(n-m)\epsilon^2}{K^2 c_2^4}, 
    \frac{(n-m)\epsilon}{K c_2^2} 
    \right) 
    \right].
    \label{eq:concentration_bound}
\end{align}

In the last step, we used the fact that the Frobenius norm of \( (I_{n} - P_A) \) is $\sqrt{n - m}$, since \( (I_{n} - P_A) \) is idempotent, with eigenvalues \( 0 \) or \( 1 \), and the sum of eigenvalues equal to the rank \( n - m \). Similarly, the operator norm of \( (I_{n} - P_A) \) is \( 1 \). The matrix $A = \frac{I_n - P_A}{n-m}$ has the Frobenius norm $= \frac{1}{\sqrt{n-m}}$ and the operator norm $= \frac{1}{n-m}$. 

We can further simplify the bound by using the fact that we are dealing with Gaussian random variables, which in turn would lead to specifying $c_2$ explicitly. 
Recall \( X_{j} = (X_{j1}, \dots, X_{jn}) \sim \mathcal{N}(0, \sigma_j^2I_{n}) \). To find the sub-Gaussian norm \( c_{2} \), we start with the condition:
\[
\mathbb{E} \left[\exp\left(\frac{X_{ji}^2}{c_{2}^2}\right) \right] \leq 2 \qquad \forall i\in [1,...,n].
\]
Using a characterization of sub-Gaussian random variables (cf. Theorem 3 of \cite{rivasplata2012subgaussian}), we can infer that $c_{2} = \frac{1}{\sqrt{3}} \sigma_{j}$. Using  \ref{ass:variance}, we obtain $c_{2} \leq \frac{1}{\sqrt{3}}$.

\end{proof}



\subsection{Proof of Theorem \ref{thm:lb}}
\label{sec:proof-lb-full}
\begin{proof}
The basis of all the calculations is an established result for the KL-divergence between multivariate Gaussian distributions stated below (\cite{pardo2005statistical}).

\begin{lemma}
	\label{lemm:KLdivg_gaussian}
	Let $\mathcal{N}_{0},\mathcal{N}_{1}$ be two k-dimensional normal distribution with zero-mean 
	and covariance matrix $A_{0},A_{1}$, respectively,
	\begin{align*}
	& \hspace{-2em}KL\left(\mathcal{N}_{0}||\mathcal{N}_{1}\right) = \frac{1}{2} \left[Tr(A_{1}^{-1}A_{0}) - k + \ln\left(\frac{\det(A_{1})}{\det(A_{0})}\right)\right]
	\end{align*}
\end{lemma}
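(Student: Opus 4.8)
The plan is to evaluate the KL-divergence directly from its definition as the expectation of a log-density ratio under $\mathcal{N}_0$, and to collapse all the randomness into one elementary Gaussian quadratic-form identity. First I would write $KL(\mathcal{N}_0 \| \mathcal{N}_1) = \E_{X \sim \mathcal{N}_0}\left[\ln\frac{p_0(X)}{p_1(X)}\right]$, where $p_i(x) = (2\pi)^{-k/2}\det(A_i)^{-1/2}\exp\left(-\frac{1}{2}x^\intercal A_i^{-1}x\right)$ is the density of the zero-mean Gaussian with covariance $A_i$ (both $A_0$ and $A_1$ being invertible, as is implicit in the statement). Expanding the logarithm of the ratio, the normalizing constants contribute the deterministic term $\frac{1}{2}\ln\left(\det(A_1)/\det(A_0)\right)$, while the exponents contribute $\frac{1}{2}\left(X^\intercal A_1^{-1}X - X^\intercal A_0^{-1}X\right)$.

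It then remains to take the expectation over $X \sim \mathcal{N}_0$. The log-determinant term is constant and passes through unchanged. For the two quadratic forms I would invoke the identity $\E_{X\sim\mathcal{N}(0,A_0)}\left[X^\intercal M X\right] = \mathrm{Tr}(M A_0)$, valid for any symmetric matrix $M$; this follows by writing $X^\intercal M X = \mathrm{Tr}(M X X^\intercal)$ and using linearity of trace and expectation together with $\E[X X^\intercal] = A_0$. Taking $M = A_0^{-1}$ gives $\E[X^\intercal A_0^{-1} X] = \mathrm{Tr}(I_k) = k$, and taking $M = A_1^{-1}$ gives $\E[X^\intercal A_1^{-1}X] = \mathrm{Tr}(A_1^{-1}A_0)$. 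Assembling the three contributions yields $KL(\mathcal{N}_0\|\mathcal{N}_1) = \frac{1}{2}\left[\mathrm{Tr}(A_1^{-1}A_0) - k + \ln\left(\det(A_1)/\det(A_0)\right)\right]$, which is exactly the claimed identity.

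There is no genuine obstacle here — this is a textbook computation — so the only points requiring care are bookkeeping ones: retaining the overall factor $\frac{1}{2}$, correctly orienting the determinant ratio as $\det(A_1)/\det(A_0)$, and applying the trace identity with the covariance $A_0$ of the \emph{sampling} distribution rather than $A_1$. In the subsequent lower-bound analysis this lemma will be applied with $A_0$ and $A_1$ instantiated as the $2 \times 2$ covariance sub-matrices of the original and transformed problem instances, so the zero-mean case stated here suffices and no further extension is needed.
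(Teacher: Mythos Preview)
Your proof is correct and is the standard textbook derivation. However, there is nothing to compare against: the paper does not actually prove this lemma. It introduces the statement as ``an established result for the KL-divergence between multivariate Gaussian distributions'' and immediately proceeds to use it (``Using this standard result, we bound KL-divergence between original and transformed problem instances below''), so your write-up supplies a proof where the paper simply cites the identity as known.
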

Using this standard result, we bound KL-divergence between original and transformed problem instances below.

\begin{align*}
    & \hspace{-2em}\textbf{Case } \mathbf{ m < j < k}:\\
    & \hspace{-2em} \textrm{When the } i\textrm{th}\;(i \in \{1,2\}) \textrm{ and the } m\textrm{th row of } \Sigma \textrm{ are relabeled, the matrices } A_{0} \textrm{ and } A_{1} \textrm{ are }
    \begin{bmatrix}
		1 & \rho^i \\
		\rho^i & 1
	\end{bmatrix} \text{ and } \\
	& \hspace{-2em} \begin{bmatrix}
		1 & \rho^m \\
		\rho^m & 1
	\end{bmatrix}. \\
 & \hspace{-2em} \textrm{ Thus, }\\
 & \qquad \qquad KL_{1j}^{1m} =  \frac{1}{2} \left( 2 \frac{(1 - \rho^{m+1})}{(1 - \rho^{2m})} - 2 + \ln \left(\frac{1 - \rho^{2m}}{1 - \rho^2}\right)\right)  \leq \frac{\rho^2}{2} \left(2 \frac{(\rho^{2m-2} - \rho^{m-1})}{(1 - \rho^2m}) + \frac{1 - \rho^{2m-2}}{1 - \rho^2}\right) \\
		& \qquad \qquad \qquad \qquad \qquad \qquad \qquad \qquad \qquad \qquad \qquad \qquad \quad \leq \frac{\rho^2}{2} \left(\frac{1 - \rho^{2(m-1)}}{1 - \rho^2}\right). \\
  & \textrm{Similarly},\\
		& \qquad \qquad KL_{2j}^{2m}  \leq \frac{\rho^4}{2}  \left(\frac{1 - \rho^{2(m-2)}}{1 - \rho^4}\right),  KL_{mj}^{1m}
		\leq \frac{\rho^2}{2} \left(\frac{1 - \rho^{m-1}}{1 - \rho^2}\right), \textrm{ and } KL_{mj}^{2m}  \leq \frac{\rho^4}{2} \left(\frac{1 - \rho^{m-2}}{1 - \rho^4}\right).
\end{align*}
	\begin{align*}
        & \hspace{-8em}\textbf{Case } \mathbf{ 1 < j < m:} \\
		& \hspace{-8em}KL_{1j}^{1m}  \leq \frac{\rho^2}{2}  \left(\frac{1 - \rho^{2(j-1)}}{1 - \rho^2}\right), KL_{2j}^{2m}  \leq \frac{\rho^4}{2}  \left(\frac{1 - \rho^{2(j-2)}}{1 - \rho^4}\right), KL_{mj}^{1m} \leq \frac{\rho^2}{2} \left(\frac{1 - \rho^{j-1}}{1 - \rho^2}\right), \textrm{ and } \\ 
  & \hspace{-8em} KL_{mj}^{2m} \leq \frac{\rho^4}{2} \left(\frac{1 - \rho^{j-2}}{1 - \rho^4}\right)\\
  & \hspace{-8em} KL_{im}^{im} = 0 \textrm{ for } i \in \{1, 2\}, KL_{ij}^{km} = 0 \textrm{ for } k \in \{1, 2\}, i \notin \{1, 2, m\} \cap j \notin \{1, 2, m\} .
	\end{align*}
	
Notice that $\E_{\Sigma}[\tau_{\delta}] = \sum\limits_{(i,j)} \E[N_{ij}(\tau_{\delta})]$.
	For any $\Sigma' \in \alt(\Sigma)$, from Lemma 1 and Remark 2 of \cite{kaufmann2015complexity}, we have
	\[
	\sum\limits_{(i,j)} \E[N_{ij}(\tau_{\delta})]  KL\left(\Sigma_{X_{i} X_{j}}||\Sigma'_{X_{i} X_{j}}\right) \geq \log\left (\frac{1}{2.4 \delta}\right).
	\]
 Consider the following optimization problem, with $\alpha_{ij} \triangleq \E[N_{ij}(\tau_{\delta})]$:
	
	\[\underset{\{\alpha_{ij}\}}{\min} \sum\limits_{(i,j)} \alpha_{ij} \quad\textrm{ subject to } \quad
	\E[\tau_{\delta}] \sum\limits_{(i,j)} \frac{\alpha_{ij}}{\E[\tau_{\delta}]} KL\left(\Sigma_{X_{i} X_{j}}||\Sigma'_{X_{i} X_{j}}\right) \geq 
	\log \left(\frac{1}{2.4 \delta}\right), \forall \Sigma' \in \alt(\Sigma).
	\]
 Letting $w_{ij} \triangleq \frac{\alpha_{ij}}{\E[\tau_{\delta}]},$ the problem defined above is equivalent to the following:
		\[\underset{\{w_{ij}\}}{\min} \E[\tau_{\delta}] \quad\textrm{ subject to } \quad
	\E[\tau_{\delta}]\; \underset{\Sigma'\in\alt(\Sigma)}{\min} \sum\limits_{(i,j)} w_{ij} KL\left(\Sigma_{X_{i} X_{j}}||\Sigma'_{X_{i} X_{j}}\right)\!\geq \log \left[\frac{1}{2.4 \delta}\right].
	\]
 Hence,
	\[
	\underset{\{w_{ij}\}}{\min} \E[\tau_{\delta}] \geq  \frac{\log (\frac{1}{2.4 \delta})}{\underset{w \in \Delta_{\binom{K}{2}}}{\max}\underset{\Sigma'\in\alt(\Sigma)}{\min} \sum\limits_{(i,j)} w_{ij} KL\left(\Sigma_{X_{i} X_{j}}||\Sigma'_{X_{i} X_{j}}\right)}.
	\]
	Next, we derive an upper bound on the max-min in the denominator above.\\
	Let $f(w,\Sigma') \triangleq \sum\limits_{(i,j)} w_{ij} KL\left(\Sigma_{X_{i} X_{j}}||\Sigma'_{X_{i} X_{j}}\right).$
    Then, we have
    \begin{align*}
		 f(w,\Sigma^{13})  &= w_{12} KL^{13}_{12} + w_{14} KL^{13}_{14} + \ldots + w_{1K} KL^{13}_{1K}  + w_{23} KL^{13}_{23} + w_{34} KL^{13}_{34} + \ldots + w_{3K} KL^{13}_{3K} \\
		& \leq \frac{\rho^2}{2 (1 - \rho^2)} \left(w_{12} (1 - \rho^2)  + (w_{14} + \ldots + w_{1K})  (1 - \rho^4) + w_{23} (1 - \rho) + (w_{34} + \ldots + w_{3K}) \right. \\
  &\left. \qquad \qquad\qquad\times(1 - \rho^2) \right), \\
		&\mathrel{\makebox[\widthof{=}]{\vdots}} \\
		 f(w, \Sigma^{1m}) 
  &= \sum\limits_{\substack{j=2 \\ j\neq m}}^K \left(w_{1j} KL^{1m}_{1j} + w_{mj} KL^{1m}_{mj} \right) \\
		& \leq \frac{\rho^2}{2 (1 - \rho^2)} \left(\sum\limits_{2 \leq j < m} w_{1j} (1 - \rho^{2(j-1)})  + \sum\limits_{m < j \leq K} w_{1j} (1 - \rho^{2(m-1)}) + \sum\limits_{2 \leq j < m} w_{mj} (1 - \rho^{j - 1}) \right. \\
  &\left. \qquad \qquad \qquad \qquad + \sum\limits_{m < j \leq K} w_{mj} (1 - \rho^{m-1}) \right).
	\end{align*}
   Along similar lines, 
   \begin{align*}
		 f(w,\Sigma^{23}) 
   &\leq \frac{\rho^4}{2 (1 - \rho^4)} \left((1 - \rho^2) (w_{24} +\ldots + w_{2K}) + (1 - \rho) (w_{34} + \ldots + w_{3K})\right), \\
		&\mathrel{\makebox[\widthof{=}]{\vdots}} \\
		 f(w,\Sigma^{2m}) &= \sum\limits_{\substack{j=3 \\ j\neq m}}^K \left(w_{2j} KL^{2m}_{2j} + w_{mj} KL^{2m}_{mj}\right)\\ 
		&  \leq \frac{\rho^4}{2 (1 - \rho^4)} \left( \sum\limits_{3 \leq j < m} w_{2j} (1 - \rho^{2(j-2)}) \sum\limits_{m < j \leq K} w_{2j} (1 - \rho^{2(m-2)})  + \sum\limits_{3 \leq j < m} w_{mj} (1 - \rho^{j-2}) \right. \\
  &\left. \qquad \qquad \qquad \quad + \sum\limits_{m < j \leq K} w_{mj} (1 - \rho^{m-2})\right).
	\end{align*}
     Notice that  $\underset{w}{\min} f(w,\Sigma^{1m}) \geq \underset{w}{\max} f(w,\Sigma^{2m}).$
    This inequality holds because
	$\rho^2 (1 - \rho^{2(m-2)}) \geq (1 + \rho^2) (1 - \rho)$  for $m \geq 3$, and $\rho \in [-1,1]$.
	
	Now,
	\begin{align*}
		& \underset{w}{\max} \min \left(f(w,\Sigma^{23}),f(w,\Sigma^{24}),\ldots,f(w,\Sigma^{2K})\right)\\ 
		= & \max_w \underset{\begin{subarray}{c}
				\alpha_{1},\alpha_{2},\ldots,\alpha_{K} \geq 0 \\
				\alpha_{1}+\alpha_{2}+\ldots+\alpha_{K} = 1
		\end{subarray}} {\min} \left\{\alpha_{1} f(w,\Sigma^{23}) + \alpha_{2} f(w,\Sigma^{24})  + \ldots + \alpha_{K} f(w,\Sigma^{2K})\right\} \\
		\leq & \underset{\begin{subarray}{c}
				\alpha_{1},\alpha_{2},\ldots,\alpha_{K} \geq 0 \\
				\alpha_{1}+\alpha_{2}+\ldots+\alpha_{K} = 1
		\end{subarray}} {\min} \max_w \; \left\{\alpha_{1} f(w,\Sigma^{23})   + \alpha_{2} f(w,\Sigma^{24}) + \ldots  + \alpha_{K} f(w,\Sigma^{2K})\right\} \\
		 \leq &  \;\;\underset{w}{\max} f(w,\Sigma^{23}) \qquad \left(\text{Choosing } \alpha_{1} = 1, \alpha_{i} = 0, i=2,\ldots,K\right) \\
		= &  \;\max_w \bigg\{\frac{\rho^4}{2 (1 - \rho^4)} \left((1 - \rho^2) (w_{24} +\ldots + w_{2K})+ \; (1 - \rho) (w_{34} + \ldots + w_{3K})\right)\bigg\} \\
		 = & \;  \frac{\rho^4}{2 (1 + \rho^2)},
	\end{align*}
where the final inequality holds for  any $\rho \in [0,1]$, with the following optimal weights: $\sum\limits_{j = 4}^{K} w_{2j} = 1$, and $\sum\limits_{j = 4}^{K} w_{3j} = 0.$

	Notice that the smallest gap $\Delta$ for the bandit instance governed by $\Sigma$ is given by  
	\begin{align*}
		\Delta & = \mse(\{2,3\}) - \mse(\{1,2\}) \\
		& = \frac{1}{(1 - \rho^4)} \left[(-2K+6)\rho^7 - \rho^6 + (K-1)\rho^4 + (K-4) \rho^2 \right].
	\end{align*}

and we wish to show that $\frac{\rho^4}{2 (1  + \rho^2)} \leq \frac{\Delta}{K^{2}}$.  Define the bi-variate function $$g(\rho,K) = \frac{\rho^4}{4 (1  + \rho^2)} - \frac{1}{K^2(1 - \rho^4)} \left[(-2K+6)\rho^7 - \rho^6 + (K-1)\rho^4 + (K-4) \rho^2 \right].$$ 
If $\rho = 0$ (uncorrelated arms) then $g(\rho,K) = 0$ is true $\forall K$. Otherwise, upon setting $g(\rho,K) = 0$, we get a quadratic in $K$ as: $$(\rho^4 - \rho^8)K^2 +(4\rho^9 + 4\rho^7 - 2\rho^6 - 4\rho^4 - 2\rho^2)K - (12 \rho^9 - 2\rho^8 - 4\rho^6 + 12\rho^7 - 10\rho^4 - 8\rho^2) = 0.$$ 
The two roots of the quadratic equation above are as  follows: $$\frac{(2\rho^4 + 2\rho^3+2\rho^2+\rho+1) \pm \sqrt{(4\rho^8 - 4\rho^7 - 10\rho^6 - 8\rho^5 - 6\rho^4 - 8\rho^3 - 3\rho^2 + 2\rho + 1)}}{\rho^3 + \rho^2}.$$ 
Let $K_{1}(\rho)$ and $K_{2}(\rho)$ denote the two roots, with $K_{1} \leq K_{2}$. Our desired result $g(\rho,K) \leq 0$ holds between $K_{1}$ to $K_{2}$, provided that the discriminant $D \geq 0$. Note that $D = 0$ has real roots $\rho_{1} \approx -0.5897, \rho_{2} \approx -0.5292, \rho_{3} \approx 0.4572$ in the interval $[-1,1]$, while the other roots either fall outside this interval or are imaginary. 

For the remainder of the proof, we restrict ourselves to the region $\rho \in (-1,-0.60]$. It is easy to see that in this region $K_{1}(\rho)$ is a monotonically increasing function with $\lim_{\rho \to -1^+} K_{1}(\rho) = 3$ and $\lim_{\rho \to -0.60^{-}} K_{1}(\rho) < 7$. Thus the smaller root is always less than $7$. The difference between the 2 roots $$d(\rho) = K_{2}-K_{1} = \frac{2\sqrt{4\rho^8 - 4\rho^7 - 10\rho^6 - 8\rho^5 - 6\rho^4 - 8\rho^3 - 3\rho^2 + 2\rho + 1}}{\rho^3 + \rho^2}$$ can be seen to be a monotonically increasing function since its first derivative: $$d'(\rho) = \frac{4(2\rho^9 + 3\rho^8 - 3\rho^7 - 3\rho^6 + \rho^5 + 6\rho^4 + 5\rho^3 - \rho^2 - 3\rho -1)}{\rho^3(\rho+1)^2\sqrt{(4\rho^8 - 4\rho^7 - 10\rho^6 - 8\rho^5 - 6\rho^4 - 8\rho^3 - 3\rho^2 + 2\rho + 1)}} > 0 \quad \text{for} \quad \rho \in (-1,-0.6].$$ 
Also, $\lim_{\rho \to -1^+} d(\rho) = \infty$ and $\lim_{\rho \to -0.60^{-}} d(\rho) < 1$. Thus, we have managed to show that for every value of $K \geq 7$, there exists a unique $\rho \in (-1,-0.60)$ such that there exists a bandit problem instance governed by a covariance matrix $\Sigma$ such that $\frac{\rho^4}{2 (1  + \rho^2)} \leq \frac{\Delta}{K^{2}}$. Moreover for a given $K$, this unique value of $\rho$ satisfies $K_{2}(\rho) = K$ or: $$\frac{(2\rho^4 + 2\rho^3+2\rho^2+\rho+1) + \sqrt{(4\rho^8 - 4\rho^7 - 10\rho^6 - 8\rho^5 - 6\rho^4 - 8\rho^3 - 3\rho^2 + 2\rho + 1)}}{\rho^3 + \rho^2} = K.$$
Hence proved.

\end{proof}

\subsection{Proof of Theorem \ref{thm:se}}
\label{sec:se-proof}
\begin{proof}
	\begin{align*}
	& \hspace{-4em}\text{ Define the event } E  = \left\{|\widehat{\mse}_{A_{t}} - \mse_{A}| < \alpha_{t}, \; \forall t = 1, 2, \ldots \textrm{ and } \forall A_{t} \in \A \right\}. 
	\end{align*}
	We establish below that $\P \left( E'\right) \leq \delta$.
	\begin{align*}
	\P \left( E' \right) & = \P \left(\sum\limits_{t = 1}^{\infty} \sum \limits_{\A} \left( |\widehat{\mse}_{A_{t}} - \mse_{A}| \geq \alpha_{t} \right)\right) \\
	& \leq \sum\limits_{t = 1}^{\infty} \sum \limits_{\A} \P \left(\left( |\widehat{\mse}_{A_{t}} - \mse_{A}| \geq \alpha_{t} \right)\right) \\
	& \leq \sum\limits_{t = 1}^{\infty} \sum \limits_{\A}  2 K  \exp \left(- \frac{(n-m)  \alpha_{t}^2}{8\left(\frac{K^2}{9} + \frac{K\alpha_{t}}{3}\right) }\right) \\
	& \leq \sum\limits_{t = 1}^{\infty} \sum \limits_{\A} 2 K  \exp \left(- \frac{(n-m) \; \log \left(\frac{2 \; \binom{K}{m} t^2}{\delta}\right)}{t}\right)\\
	&  \leq \sum\limits_{t = 1}^{\infty} \sum \limits_{\A} 2K  \exp \left(- \log \left(\frac{2 \; \binom{K}{m} t^2}{\delta}\right) \right) \\
	& \qquad \textrm{ since } (n-m) \geq t\\
	& \leq \sum\limits_{t = 1}^{\infty} \sum \limits_{\A} 2K  \left(\frac{\delta}{2 \; \binom{K}{m} t^2}\right) \\
	& \leq K \sum\limits_{t = 1}^{\infty} \left(\frac{1}{t^2}\right) \sum \limits_{A} \left(\frac{\delta}{ \; \binom{K}{m} }\right) \\
	& \leq K  \sum \limits_{A} \left(\frac{\delta}{ \; \binom{K}{m} }\right) \leq \delta.
	\end{align*}
	Now, we show that with probability $1 - \delta$, the best subset can never be eliminated. The best subset $A^*$ gets eliminated if at some time $t$, for some suboptimal subset $A_{t}$, the following condition holds 
	\begin{align}
	\widehat{\mse}_{A_{t}} + \alpha_{t} < \widehat{\mse}_{A^*} - \alpha_{t} \label{eq:se-cond1}
	\end{align}
	On the event $E$,
	\begin{align}
	\mse_{A^*} \geq \widehat{\mse}_{A^*} - \alpha_{t}, \quad\widehat{\mse}_{A_{t}} + \alpha_{t} \geq \mse_{A_{t}}  \label{eq:se-cond2} 
	\end{align}   
	Substituting  \eqref{eq:se-cond2} in  \eqref{eq:se-cond1},  we obtain the following: $\mse_{A^*} \geq \widehat{\mse}_{A^*} - \alpha_{t} \geq \widehat{\mse}_{A_{t}} + \alpha_{t}  \geq \mse_{A_{t}}$,
	and this leads to a contradiction. Hence, w.p. $1 - \delta$, the best subset is never eliminated, and the successive elimination algorithm is $\left( 0, \delta \right)$ - PAC.
	
	Next, we derive a bound on sample complexity of the successive elimination algorithm.
	
	Notice that, on the event $E$,  from \eqref{eq:se-cond2}, we have $\widehat{\mse}_{A^*} \leq \mse_{A^*} + \alpha_{t}, \widehat{\mse}_{A_{t}} \geq \mse_{A} - \alpha_{t}$.

	Now, $\widehat{\mse}_{A^*} + \alpha_{t} \leq \mse_{A^*} + 2 \alpha_{t} \leq \mse_{A} - 2 \alpha_{t} \leq \widehat{\mse}_{A_{t}} - \alpha_{t} \textrm{ which holds if } \mse_{A} - \mse_{A^*} \geq 4 \alpha_{t} $ or,\\
	equivalently  $\Delta(A) \geq 4 \alpha_{t}.$
	
	From \eqref{eq:alphat_appendix}, for some universal constants $c_1, c_2$, we have  
 \begin{align*}
     & \hspace{-2em }\alpha_{t} = \sqrt{a} \left(c_{2} \sqrt{a} + c_{1} \right), \textrm{ where } a =  \left(\frac{\log \left(\frac{2 \; \binom{K}{m} t^2}{\delta}\right)}{c_{1} \; t}\right).
 \end{align*}
 \begin{align*}
     & \hspace{-3em} \textrm{Solving } \Delta(A) - 4 \left(c_{2} a + \sqrt{c_{1}} \sqrt{a} \right) \geq 0, \textrm{ we obtain,  } \textrm{as a solution for } a, \\
     & \qquad \qquad \qquad \qquad \qquad 0 \leq a \leq \frac{-2 c_{2}^2 \sqrt{\frac{c_{1} \left(c_{1} + c_{2} \Delta(A)\right) }{c_{2}^4}}  + 2c_{1} + c_{2} \Delta(A)}{4 c_{2}^2}.\\
     & \hspace{-3em} \textrm{Therefore, } a \leq \frac{2c_{1} + c_{2} \Delta(A)}{4 c_{2}^2}. \textrm{ Finally, by solving the equation } \left(\frac{\log \left(\frac{2 \; \binom{K}{m} t^2}{\delta}\right)}{ c_{1} \; t}\right) \leq \frac{2c_{1} + c_{2} \Delta(A)}{4 c_{2}^2}, \\
 \end{align*}
 In each round of successive elimination, we observe a sample $K$-vector from the underlying multivariate Gaussian distribution.
 Therefore, w.p. $1 - \delta$, the overall sample complexity is bounded above by
 \begin{align*}
     &\mathcal{O} \left( \frac{1}{\Delta} \log \left(\frac{\binom{K}{m} \log \left(\Delta^{-1}\right)}{\delta}\right)\right),
 \end{align*}
where $\Delta$ denotes the smallest gap. 	
\end{proof}


\section{Conclusions}
\label{sec:conclusions}
For the problem of estimation of the MSE of a given subset, with a multivariate Gaussian model, we proposed two  estimators, and derived tail bounds that exponentially concentrate. The first estimator is non-adaptive and exhibits weaker sample complexity guarantees in comparison to the second estimator, which is derived using a regression framework. We formulated the estimation problem with bandit feedback in the best-subset identification setting, and proposed a variant of the successive elimination technique. Finally, we also derived a minimax lower bound to understand the fundamental limit on the sample complexity of the aforementioned estimation problem with bandit feedback.


\bibliography{refs}
\bibliographystyle{IEEEtran}

\end{document}